\newtheorem{theorem}{Theorem}
\newtheorem{fact}{Fact}
\theoremstyle{definition}
\newtheorem{definition}{Definition}
\title{Rethinking the Foundations for Continual \\
Reinforcement Learning}
\author{Esraa Elelimy\textsuperscript{1,2}, David Szepesvari\textsuperscript{1,2}, Martha White \textsuperscript{1,2,3}, Michael Bowling\textsuperscript{1,2,3}}
\keywords{Continual reinforcement learning, hindsight rationality, history process, MDPs} 
\def\Action{{\cal A}}
\def\Obs{{\cal O}}
\def\Hist{{\cal H}}
\def\State{{\cal S}}
\def\env{e}
\def\reward{R}
\def\agent{\lambda}
\def\Agent{\Lambda}
\def\AgentBasis{\Lambda_b}
\def\regret{\rho}
\def\dev{\phi}
\def\Dev{\Phi}
\newcommand{\PP}{\mathbb{P}}
\newcommand{\E}{\mathbb{E}}
\begin{document}

\makeCover  
\maketitle  

\begin{abstract}
In the traditional view of reinforcement learning, the agent's goal is to find an optimal policy that maximizes its expected sum of rewards. Once the agent finds this policy, the learning ends. This view contrasts with \emph{continual reinforcement learning}, where learning does not end, and agents are expected to continually learn and adapt indefinitely. Despite the clear distinction between these two paradigms of learning, much of the progress in continual reinforcement learning has been shaped by foundations rooted in the traditional view of reinforcement learning. In this paper, we first examine whether the foundations of traditional reinforcement learning are suitable for the continual reinforcement learning paradigm. We identify four key pillars of the traditional reinforcement learning foundations that are antithetical to the goals of continual learning: the Markov decision process formalism, the focus on atemporal artifacts, the expected sum of rewards as an evaluation metric, and episodic benchmark environments that embrace the other three foundations. We then propose a new formalism that sheds the first and the third foundations and replaces them with the history process as a mathematical formalism and a new definition of deviation regret, adapted for continual learning, as an evaluation metric. Finally, we discuss possible approaches to shed the other two foundations.
\end{abstract}

\section{Introduction}
``Consider a Markov decision process defined by the tuple ...'' starts many background sections of reinforcement learning (RL) papers.  The Markov Decision Process (MDP) formalism, among other foundational concepts, has long shaped how we think about agents, algorithms, and evaluation in RL. 
However, these foundational concepts stemmed from a classical framing of the RL problem: \emph{an agent's goal is to find an optimal policy that maximizes its expected sum of rewards}. Once this policy is found, the learning ends -- the agent no longer needs to adapt because the policy is, by definition, optimal. That traditional view influenced many of the foundations and standard practices in the field. For example, a direct consequence of the view that learning ends with finding an optimal solution is to have a separate training phase with the goal of finding that optimal solution and then have a deployment phase where no more learning is happening. Another consequence is the emphasis on the artifacts that the training process produces and overlooking the behavior of the agent during learning.

There are many decision-making problems where the traditional framing of RL is a shortcoming. For example, agents acting in a world that is much bigger and more complex than themselves,  such that they cannot perceive or represent its true underlying state, will neither be able to represent the value of the states they find themselves in nor find an optimal policy~\citep{javed2024the}. Such agents can only rely on approximate solutions that continually adapt to perceived changes in their environment and improve as they accumulate more knowledge by interacting with the world. In these types of decision-making problems, learning is no longer about finding an optimal solution but about continual and never-ending adaption. This class of decision-making problems, where continual adaption is necessary, is called continual reinforcement learning~\citep{AbelEtAl23-definition}.

While the traditional and the continual learning views of RL share some similarities --- they both tackle the problem of learning by interacting with the world --- they have a crucial difference: framing learning as a means to find optimal artifacts versus learning as an indefinite process of adaption. Given this core difference, it is essential to reflect on whether the foundations that has stemmed from the traditional view still hold and are helpful when addressing the continual learning problem. Or could these traditional foundations hold us back from thinking most usefully about the problem?  

In the first part of this paper, we identify four traditional foundational principles and practices that shape and constrain our thinking about RL.
We argue that these foundations, shaped by the traditional framing of RL, are antithetical to the purported goals of continual reinforcement learning and may be holding us back from making progress toward continual learning. Moreover, these foundations are self-reinforcing: each depends upon and holds up the others, such that when attempting to replace one, the others constrain you to keep it.

In the second part of the paper, we propose a new formalism that sheds two of these foundations, and we discuss possible alternatives for replacing the remaining two foundations.

\section{Four Foundations of Traditional RL}
\label{sec:foundations}
Most reinforcement learning research, along with recent progress in continual reinforcement learning, make the following assumptions, implicity or explicitly:
\begin{enumerate}
    \item \textbf{Formalism:} The appropriate mathematical formalism is the \emph{Markov decision process}.
    \item \textbf{Objective:} The goal of RL algorithms is to produce \emph{atemporal artifacts} (such as an optimal policy or value function).
    \item \textbf{Evaluation:} The ideal measure of evaluation is the \emph{expected sum of rewards}.
    \item \textbf{Benchmarking:} Most benchmarks for comparing RL algorithms are \emph{episodic environments}.
\end{enumerate}

These assumptions are the pillars of the traditional RL foundations and remain pervasive within modern RL research. Celebrated results such as DQN reaching human-level performance in Atari \citep{Mnih2015HumanlevelCT}, AlphaGo \citep{silvermastering}, GT-Sophy \citep{wurman2022outracing}, balloons in the stratosphere \citep{bellemare2020autonomous}, and DeepStack beating professional poker players \citep{moravvcik2017deepstack} all embody these foundations. They undergo a separate training phase in episodic environments respecting common MDP assumptions such as ergodicity and communicating dynamics. This training process generates atemporal artifacts (policy or value functions) that are considered optimal or near-optimal. These artifacts are then evaluated according to their expected sum of rewards in an evaluation phase where no more learning occurs. 

While these foundations were behind most of the advancement of traditional RL research, do they give us an appropriate structure to pursue continual reinforcement learning? Continual reinforcement learning does not have a consensus definition~\citep {ring1994continual,AbelEtAl23-definition}. However, its very name implies that learning should continue. We now discuss that this conclusion alone is enough to create cracks in those four foundations, and we will briefly summarize the alternatives that could replace those traditional foundations.

\textbf{Foundation One: MDPs as a Mathematical Formalism.}
This foundation is concerned with the assumptions on the environment that typically accompany the MDP formalism. We often make ergodicity assumptions, such as the MDP being unichain or communicating, which imply some characteristics of the environment. For example, we may implicitly assume every state is reachable from every other state or that the state distribution converges to some stationary distribution. Furthermore, we usually presume some properties of the MDP, such as finite state and action spaces or compact spaces with continuity assumptions. There are some problems where these assumptions hold and the MDP formalism works well. In grid-world environments, for instance, an agent can revisit any state as often as needed. In Go, repeatedly playing the game in episodes guarantees a form of ergodicity since it allows the agent to repeatedly visit previous game states by replaying the same sequence of moves. However, an important observation is that these are also examples where continual learning is unnecessary.

In contrast, the need for continual learning arises in settings with unpredictable non-stationarity in the environment
~\citep{khetarpal2022towards} or those that align with the \emph{big world hypothesis}~\citep{javed2024the}. The \emph{big world hypothesis} suggests that even if the real world is stationary, its complexity is much richer than the representational capacity of any agent in it. Hence, the world will appear unpredictably non-stationary. When acting in a much more complex world or when there are constraints on the computational resources of the agent, continual learning is needed \citep{kumar2023continual,dong2022simple}, even if the underlying world is stationary \citep{trackingsutton}. 
In these settings, the predictable stationarity of MDPs is invalid. Moreover, real-world settings do not allow one to reset the world into repeatable episodes or revisit states previously visited. You, the reader, can never revisit the state before you read these words. This inability to revisit states renders ergodicity assumptions unrealistic for real-world settings. 

\textit{The Alternative: History Processes as a Mathematical Formalism.} Beyond the agent-environment interface, this formalism has few assumptions about the process since the \emph{big world hypothesis} does not allow the agent to assume a priori structure or regularity about the environment. We expand on this foundation formally in Section~\ref{sec:history_process}. 

\textbf{Foundation Two: Focus on Atemporal Artifacts.}
Artifacts refer to any atemporal representation of an agent's learned knowledge, such as policies, value functions, options, or features. We often give considerable concern to the notion of optimal value functions and optimal policies. The assumption that learning should produce those fixed representations leads us to think of algorithms having a ``training'' period wherein they aim to converge to optimal artifacts and follow that with a ``testing'' phase to evaluate the generated artifacts. These artifacts exist for some problems, such as the grid world and chess examples, but they do not exist for problems that require continual learning.

Environments of interest to continual learning rarely admit fixed optimal artifacts. The assumption that an agent can converge to an optimal policy or a value function contradicts the very need for continual adaption since such an atemporal artifact would be the end of learning rather than requiring its continuation. For example, consider an agent with computational constraints that cannot fully represent the values of all possible states in its environment. For that agent, even if a fixed optimal value function theoretically exists, it cannot represent it, compute it, or store it. Instead,
such an agent must rely on an approximation of this value function that evolves over time, deciding which information to retain and which to discard. In this context, the most useful value representation is continually adapting and time-dependent, not atemporal. As a result, a focus on fixed atemporal artifacts should be replaced with a focus on the continual adaption of the agent's behavior. 
This foundation is also notably critiqued as \emph{Dogma Two} by \citet{abel2024dogmasreinforcementlearning}.

\textit{The Alternative: Focus on Behaviour.}
The goal of RL algorithms is to produce behavior in response to experience. In the continual learning setting, there is no difference between training and testing.  All the past experience is training, and all future experience is testing.  The focal point is how an agent behaves in response to its experience.

\textbf{Foundation Three: Expected Sum of Rewards as an Evaluation Measure.}
In episodic environments, this is the episodic return, and we desire that during training, we see the episodic return approach the return of the optimal policy. Episodes allow drawing i.i.d. samples of this return for any stationary policy,  which is how evaluation is usually performed during the testing phase. Hence, maximizing the episodic return during training often leads to better performance during the testing phase. 

A salient feature of real-world settings that require continual learning is the inability to reset the world or revisit previous states, i.e., the MDP may not be communicating, as discussed in Foundation One. A ramification of this feature is that it is not even possible to estimate an \emph{expected sum of rewards} as it would require
the environment to be repeatedly reset to something akin to an initial state so that the agent can reliably try different actions in the same states to achieve the optimal performance criteria. One might think the average reward criterion in a continuing environment is a solution to this criticism.  However, without the communicating assumption, a high average reward may be more a property of how fortunate the agent is to end up in a particular communicating class of states with a high average reward.
For continual learning settings, we need a measure of evaluation that does not depend on having such a repeatability assumption. 

\textit{The Alternative: Deviation Regret as an Evaluation Measure.}
We propose deviation regret as an evaluation measure for continual learning agents. Deviation regret was proposed as the evaluation measure for defining hindsight rationality, originally introduced in the context of strategic games~\citep{MorrillEtAl21-hindsight}. We further develop this concept for continual reinforcement learning.  The essence is that agents should be evaluated on the ``situations'' they find themselves in, not against some optimal, unrealizable sequence of actions. We formulate deviation regret for continual learning in Section~\ref{hsr}.

\textbf{Foundation Four: Episodic Benchmarks.}
Common environments, such as classic control tasks and the Arcade Learning Environment (ALE, \citet{bellemare2013arcade}), are episodic and, therefore, are communicating MDPs. Other naturally continuing environments, such as Mujoco \citep{todorov2012mujoco} and Minecraft, are often truncated during training, converting them into episodic tasks. A few examples of continuing, never-ending environments, such as Jelly Bean World \citep{plataniosjelly} exist but have not been widely adopted. 

Most of these traditional \emph{benchmarks} are problematic when considering the goal of continual learning. They reinforce the idea that environments can always be thought of as ergodic and episodic and exhibit an optimal policy, which is the assumed goal of traditional RL training.

\textit{The Alternative: Benchmark Environments Without a Clear Markov State or Episode Reset.}  We will not expand on this much beyond recognizing that it as an issue.  In summary, we should not expect to see continual learning algorithms differentiate themselves in environments where continual learning is unnecessary. Additionally, more work is needed to design environments where continual learning is needed.  To make progress, we should have benchmarks that align with the big world hypothesis. Ideally, we should test our agents in the complex, big, real world, but this is impractical for algorithmic development and scientific repeatability. An alternative is to constrain our agents' representational capacity and use more modest-sized environments such that the constraints simulate the big world hypothesis and allow for the development of agents that can cope in such continual learning settings.

\textbf{Final Remarks on the Traditional Foundations.}
The four foundations we discussed are self-reinforcing.  Just presuming the goal of artifacts immediately suggests the MDP formalism to support the existence of an optimal policy and necessary assumptions to ensure it can be learned, with benchmark environments that fit these assumptions.  Similarly, our common benchmark environments have a clear notion of optimal policy, making the focus be on algorithms that produce such an artifact.  It is no simple task to tear down any one of these foundations when the others demand its reinstatement.  Hence, our proposed alternatives seek to replace all four of these foundations.

\section{History Processes as a Mathematical Formalism}
\label{sec:history_process}
For the new formalism to support the goals of continual RL, we need to place as few constraints on the environment as possible. Ideally, constraints would be limited to the interface between the environment and the agent (e.g., actions, observations, rewards) but not on the properties of the environment or its dynamics (e.g., Markovianity, ergodicity).  One might consider this as an impossible approach as there needs to be some structure or repeatability in the environment to make learning possible.  We will resolve this by making post hoc statements as is common with bandit algorithms, e.g., this agent performs nearly as well as the single best arm in hindsight. Such statements can be made for stationary bandits (with assumptions on the environment) and for adversarial bandits (where limited assumptions are made).

We base the environment definition on the formalism introduced by \citet{BowlingEtAl2023-settling}, which had a similar aim to approach environments and goals as generally as possible. We deviate slightly from this formalism by assuming that the agent acts first, as in the work by \citet{AbelEtAl23-definition}. Formally, we assume a finite action space, $\Action$, and a finite observation space $\Obs$.  
We can then define the space of finite-length histories as $\Hist \equiv \bigcup_{n=0}^\infty (\Action \times \Obs)^n$, which is the set of all possible sequences of observation-action pairs that can result from the agent-environment interaction. We then define the environment as follows:
\begin{definition}
An environment $\env$ is a function from finite-length histories and actions to a distribution over observations,
$\env : \Hist \times \Action \rightarrow \Delta(\Obs)$.
\end{definition}
Finally, we assume that the agent's goal is a preference relation over histories that satisfies the reward hypothesis axioms~\citep{BowlingEtAl2023-settling}, including temporal $\gamma$-indifference.  Hence, it can be represented as a reward function mapping from actions and observations to a real-valued number: $\reward : \Action \times \Obs \rightarrow {\mathbb R}$, where the agent's goal is to maximize the expected $\gamma$-discounted sum of rewards $\reward(a_t, o_t)$, summed over the transitions in its history. 
Since the domain of this function is the finite set of actions and observations, the range of this reward function is bounded.

We continue to follow \citet{BowlingEtAl2023-settling} and define an agent as follows:
\begin{definition}
An agent $\agent$ is a function from finite-length histories to a distribution over actions, 
$\agent : \Hist \rightarrow \Delta(\Action)$.
\end{definition}
We will focus on agents that can be decomposed into a \emph{representation of state} and a system that learns to select policies over this representation. Formally, let $\State$ be a finite set, which we will call states, and let $S : \Hist \rightarrow \State$ be some fixed partition of the histories such that $S(h) \in \State$ is the agent's representation of the state for history $h$. Using this state representation, we can specify a notion of a policy, $\pi : \State \rightarrow \Delta(\Action)$, as a mapping from a state to a distribution over actions, with $\Pi$ being some fixed set of such mappings.
Finally, we define an agent's learning rule as follows:
\begin{definition}
The agent's learning rule $\sigma$ is a function from finite-length histories to a distribution over policies,
$\sigma : \Hist \rightarrow \Delta(\Pi)$.
\end{definition}

To illustrate how these definitions interact, consider the history at time $t$, $h_t \equiv \left<a_1, o_1, \ldots, a_t, o_t\right>$. Given that history, the agent takes an action $a_{t+1} \sim \pi_{t}\left(S(h_t)\right)$ where $\pi_t = \sigma(h_t)$.  The environment then generates an observation $o_{t+1} \sim e(h_t, a_t)$ creating the new history $h_{t+1}$.

\textbf{Remarks.}
The use of state here should not be confused with the requirements on the state as used in an MDP, such as Markovianity. It is not intended to restrict the dynamics of the environment, it is the agent's own representation of the history.  One may require $S$ to be defined in the form of a state update function, $u : \State \times \Action \times \Obs \rightarrow \State$, that defines how states evolve in a recurrent fashion with each each transition from a starting state $s_0$ as in \citet{MorrillEtAl22-partially}.

This kind of decomposition of the agent into a fixed state representation and an adapting policy is explicitly seen in \citet{MorrillEtAl22-partially} and \citet{dong2022simple}, and implicitly in \citet{AbelEtAl23-definition}.  In the latter, they introduce the notion of an \emph{agent basis}: $\AgentBasis \subset \Agent$, and a learning rule that maps histories to an element of the agent basis.  We are essentially choosing $\Pi$ as our agent basis $\AgentBasis$, and we allow the learning rule $\sigma$ to map to a distribution over the agent basis, i.e., over the policy set $\Pi$.  As with~\citet{AbelEtAl23-definition}, we will examine the agent's learning through its learning rule $\sigma$ that is adapting the choice of policy $\pi_t$ from its experience, $h_t$.  

\section{Deviation Regret as an Evaluation Measure}\label{hsr}
Given the history process formalism, we now turn our attention to a measure of evaluation. 

\subsection{Agents as Creators of Worlds}
Given an environment $e$ and a finite-length history $h$, we can construct a new environment, $\env_h(h', a) \equiv \env(h \cdot h', a)$, 
which defines the set of distributions over observations that arise from actions taken after history $h$.  This matches our mathematical formalism for an environment.  Thus, as an agent acts in its environment instantiating a sequence of histories $h_1, h_2, \ldots, h_n$, it can be seen as also instantiating a sequence of \emph{worlds}, each world is itself an environment, $\env_{h_1}, \env_{h_2}, \ldots, \env_{h_n}$.  {\bf An effective learning agent should be well-adapted to the worlds that it finds itself in.}  We will attempt to instantiate this notion using {\em deviation regret}, extending the notion of hindsight rationality from ~\citet{MorrillEtAl21-hindsight}, which focused on the setting where there is a repeatability of the history process, to continual learning where there is no repeatability.

We now define a deviation $\phi$ as a function that systematically applies modifications to the agent's policy. Formally, a \emph{deviation} is defined as $\dev : \Pi \rightarrow \Pi$, where $\Pi$ is the set of all possible policies. For example, a deviation might change the action taken at a singular state, or if the agent's policy is a parametrized function, it might apply a systematic perturbation to the parameters of the policy, generating a new deviation policy.
As we discussed in section~\ref{sec:history_process}, the agent's learning rule $\sigma$ generates the agent's policy at each time step given the history up to that time step, i.e., $\pi_t = \sigma(h_t)$. To study an agent under a deviation, we apply the deviation $\dev$ to the agent's policy in each timestep, producing the deviation policy $\phi(\pi_t)$.
Hence, we can further define a function that composes the agent's learning rule with the deviation function: $\phi(\sigma): \Hist \rightarrow \Delta(\Pi)$.

Deviation regret focuses on the notion of a systematic deviation. 
For any particular deviation, we care about the agent's {\em regret} for not applying the deviation, and we sum this regret over opportunities to apply this deviation. In our case, the sequence of opportunities is the sequence of worlds instantiated by the agent's own interaction with the environment. This gives us a deviation regret for deviation $\dev$ in environment $\env$ by agent $\lambda$,
\begin{align}\label{eq:dev_regret_def}
\!\!\underbrace{\regret_T(\phi, \agent, \env)}_{\mbox{\small deviation regret}} \!\!=
\frac{1}{T}\sum_{t=1}^{T} \bigg(
\underbrace{\E\left[\sum_{i=t}^{t+H-1} \gamma^{(i-t)} R_i \bigg| \phi(\sigma), H_{t-1}\right]}_{\mbox{\small deviation return}} - 
\underbrace{\E\left[\sum_{i=t}^{t+H-1} \gamma^{(i-t)} R_i \bigg| \sigma, H_{t-1}\right]}_{\mbox{\small agent return}}
\bigg)
\end{align}
where $H$ is an evaluation horizon chosen so $\gamma^H$ is sufficiently small, and $H_t$ is the history (and corresponding world) experienced by the agent in timestep $t$.
An important note is that we discount rewards at time $i$ with $(i-t)$, since this new world starts at time $t$, with all previously accumulated rewards $r_1, \ldots, r_{t-1}$ shared by both the deviation return and the agent return (so they cancel in the difference).  The purpose of discounting in this way is to treat each world equally rather than treating later worlds as discounted by the time since the beginning of the interaction.

As is common with regret notions, we are interested in whether $\regret_T(\dev, \agent, \env) \rightarrow 0$, i.e., the deviation regret is approaching zero almost surely or in expectation for any environment.  And if this holds for all deviations $\dev \in \Dev$, we say that the agent is minimizes deviation regret with respect to the set of deviations $\Dev$.
What do we choose for the set $\Dev$?  This question has interesting answers in the repeated extensive-form game setting \citep{MorrillEtAl21-hindsight, MorrillEtAl21-efficient}, but as one concrete example, we might consider $\Dev$ to be the class of {\em external deviations}.  An external deviation is a constant function, i.e., $\dev_\pi(\cdot) \equiv \pi$.  So we can consider $\Dev_{\text{ext}} = \left\{ \dev_\pi \right\}_{\pi\in\Pi}$.  In this case, deviation regret is comparing the agent's expected return to the expected return of a fixed policy averaged over the worlds experienced by the agent.  With no additional assumptions on the environment, this would necessitate an agent that continually learns. Furthermore, as an evaluation measure, deviation regret focuses on the agent's behavior in response to its experience, shifting the focus away from artifacts.

\subsection{Deviation-Regret Estimation}
We now show that an agent can estimate the deviation regret given its stream of experience. The definition of deviation regret in Eq.~\ref{eq:dev_regret_def} consists of two components: the agent return and the deviation return. The rewards along the trajectory of the agent directly estimate the agent return. The deviation return may seem unknowable as it requires a counterfactual estimate of the return under an alternative sequence of policies. However, just as with adversarial bandits, we can estimate the counterfactual return of having applied a deviation as long as the agent's support for policies is always closed under the deviation function, so that one can compute an importance sampling ratio $\frac{\Pr(a_i | \dev(\pi_i))}{\Pr(a_i | \pi_i)}$ and construct an unbiased estimator of the deviation return with bounded variance. This can be achieved by a sufficiently random learning rule.
A precise algorithm for the deviation regret estimate is given in Algorithm~\ref{alg:estimate}. While the presented algorithm uses ordinary importance sampling, practical implementations may use other importance sampling variants or variance reduction methods. 
\begin{algorithm}
{\small
\caption{Estimating the Deviation Regret $\hat{\regret}_T(\phi, \lambda, e)$}
\label{alg:estimate}
\DontPrintSemicolon 
\KwIn{Deviation $\phi$, agent $\lambda$, horizon $H$, trajectory $\{(h_{t-1}, a_t, r_t)\}_{t=1}^{T}$}
\KwOut{Estimated deviation regret $\hat{\regret}_T(\phi, \lambda, e)$}
Initialize $\hat{G}_T \gets 0$, $\hat{G}'_T \gets 0$ \;
\For{$t = 1$ to $T$}{
    Compute $G_t \gets \sum_{i=t}^{\min(T, t+H-1)} \gamma^{i-t} r_i$ \;
    Compute importance weight $W_t \gets \prod_{i=t}^{\min(T, t+H-1)} \frac{\phi(\pi_i)(a_i | h_{i-1})}{\pi_i(a_i | h_{i-1})}$ \;
    Update $\hat{G}_T \gets \hat{G}_T + G_t$, $\hat{G}'_T \gets \hat{G}'_T + W_t G_t$ \;
}
Compute $\hat{\regret}_T(\phi, \lambda, e) \gets \frac{1}{T} (\hat{G}_T - \hat{G}'_T)$ \;
\Return $\hat{\regret}_T(\phi, \lambda, e)$ \;
}
\end{algorithm}

Now that we have an estimator for the deviation regret, we are interested in understanding the quality of that estimator. For the case where we have a finite horizon $H$, Theorem \ref{thm:h-step_estimate} states that if the agent's policy is sufficiently random, the deviation regret estimator in Algorithm~\ref{alg:estimate} is consistent. i.e, as the agent's experience grows, the agent's estimate of the deviation regret gets arbitrarily close to the true deviation regret, with probability approaching $1$. Additionally, we extended those results to the case where we might have an infinite horizon, $H = \infty$. Theorem ~\ref{eq:consistency} states that we can have a consistent estimator for the case $H = \infty$ and $\gamma < 1$. 
We provide proofs for the finite horizon case in Appendix~\ref{sec:theorem_1_proof} and for the infinite case in~\ref{sec:inf_consistent_proof}. While the statements here are asymptotic, the appendix contains a finite sample bound for the $H$-step deviation return estimator.

\begin{theorem}[Estimating the $H$-step Deviation Regret]
\label{thm:h-step_estimate}
The estimator we defined above, $\hat \regret_T (\phi, \agent, \env)$, is a consistent estimator of deviation regret $\regret_T(\phi, \agent, \env)$ for all environments $\env$, deviations $\phi$, $\gamma \in [0, 1]$, and agents $\agent$ that take every action with probability at least $c > 0$ in every timestep.
More precisely, for all $\varepsilon > 0$, 
\begin{equation} \label{eq:consistency}
    \lim_{T\to\infty} \PP \big(
        |\regret_T (\phi, \agent, \env) - \hat \regret_T (\phi, \agent, \env) | \leq \varepsilon \big) = 1,
\end{equation}
where the probability is taken over the random behaviour of the agent acting in the environment.
\end{theorem}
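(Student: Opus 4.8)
The plan is to write the difference between the estimator and the target as a sum of two centered empirical averages, recognize each (after a regrouping trick to deal with overlapping horizons) as a normalized sum of bounded martingale differences, and apply the Hoeffding--Azuma inequality. Using the notation of Algorithm~\ref{alg:estimate}, but with the truncation $\min(T,\,t+H-1)$ replaced by $t+H-1$ throughout --- this changes $\hat\regret_T$ by only $O(H/T)$, since just the last $H-1$ windows are affected and all summands are uniformly bounded (shown below) --- set $G_t = \sum_{i=t}^{t+H-1}\gamma^{i-t}R_i$ and $W_t = \prod_{i=t}^{t+H-1}\frac{\phi(\pi_i)(a_i\mid h_{i-1})}{\pi_i(a_i\mid h_{i-1})}$, let $A_t$ and $D_t$ be the agent return and the deviation return in Eq.~\ref{eq:dev_regret_def}, and let $\mathcal F_{t}$ be the $\sigma$-field of the realized history $H_t$. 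Since $\regret_T = \frac1T\sum_{t=1}^{T}(D_t - A_t)$ and (the untruncated) $\hat\regret_T = \frac1T\sum_{t=1}^{T}(G_t - W_tG_t)$, we get $\hat\regret_T - \regret_T = \frac1T\sum_{t=1}^{T}M_t - \frac1T\sum_{t=1}^{T}N_t$ with $M_t := G_t - A_t$ and $N_t := W_tG_t - D_t$, so it suffices to show both averages tend to $0$ in probability, and nothing about $\env$ beyond the ambient formalism is used.

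First, by definition of the agent return, $A_t = \E[G_t\mid\mathcal F_{t-1}]$ under the law of $\agent$ interacting with $\env$, so $\E[M_t\mid\mathcal F_{t-1}]=0$, and $|M_t|\le 2C_H$ with $C_H := R_{\max}\sum_{k=0}^{H-1}\gamma^k$, finite for $\gamma\in[0,1]$ because the reward function has finite domain and hence bounded range. The analogue for $N_t$ hinges on the importance-sampling identity $\E[W_tG_t\mid\mathcal F_{t-1}] = D_t$. To obtain it, condition on $H_{t-1}=h$ and sum over realized continuations $\tau = (a_t,o_t,\dots,a_{t+H-1},o_{t+H-1})$: the probability of $\tau$ under the agent is $\prod_{i=t}^{t+H-1}\pi_i(a_i\mid h_{i-1})\,\env(h_{i-1},a_i)(o_i)$ with $\pi_i=\sigma(h_{i-1})$, and multiplying by $W_t$ replaces each $\pi_i(a_i\mid h_{i-1})$ by $\phi(\pi_i)(a_i\mid h_{i-1})$ while leaving the environment factors untouched, which is exactly the probability of $\tau$ under $\phi(\sigma)$ started from $h$; since $G_t$ is a deterministic function of $(h,\tau)$, the sum equals $D_t$. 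The hypothesis that $\agent$ plays every action with probability at least $c>0$ ensures $\pi_i(a_i\mid h_{i-1})\ge c$, so the ratio is always well defined and no $\tau$ in the support of $\phi(\sigma)$ is dropped; it also gives $W_t\le c^{-H}$, hence $|N_t|\le 2C_H c^{-H}$, with $\E[N_t\mid\mathcal F_{t-1}]=0$.

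The only remaining obstacle is that $\{M_t\}$ and $\{N_t\}$ are \emph{not} martingale-difference sequences for $\{\mathcal F_t\}$: each of $M_t,N_t$ is $\mathcal F_{t+H-1}$-measurable, so consecutive $H$-step windows overlap. I would handle this by splitting $\{1,\dots,T\}$ into the $H$ residue classes modulo $H$; within one class the windows are pairwise disjoint, and the restricted sequence $\{M_t\}$ (resp.\ $\{N_t\}$) is a bounded martingale-difference sequence for the coarsened filtration $\{\mathcal F_{t+H-1}\}$, because the preceding index $t-H$ in the class has window ending at $\mathcal F_{t-1}$ and $\E[M_t\mid\mathcal F_{t-1}]=0$. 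Applying Hoeffding--Azuma to each of the $H$ classes (each of length $\Theta(T/H)$) and a union bound shows every class-average, hence $\frac1T\sum_{t=1}^{T}M_t$ and $\frac1T\sum_{t=1}^{T}N_t$, is within any $\varepsilon>0$ of $0$ with probability tending to $1$; together with the $O(H/T)$ truncation term this gives Eq.~\ref{eq:consistency}. Tracking the constants $C_H$, $c^{-H}$, $H$ in this Hoeffding--Azuma step is also what yields the finite-sample bound for the $H$-step deviation-return estimator mentioned after the theorem. The step I expect to need the most care is the importance-sampling identity $\E[W_tG_t\mid\mathcal F_{t-1}]=D_t$, because $\pi_i$ and $\phi(\pi_i)$ are path-dependent and non-stationary: one must check that the change of measure runs along the agent's \emph{realized} history and that conditioning on $H_{t-1}$ --- the "world" the agent currently inhabits --- is exactly what makes $W_tG_t$ unbiased for the deviation return.
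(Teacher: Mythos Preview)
Your proposal is correct and follows the same skeleton as the paper: split off an $O(1/T)$ truncation error, then control the idealized (untruncated) estimator by showing the per-step error terms have conditional mean zero given $H_{t-1}$ (via the importance-sampling identity for the deviation return), bound them using $|R|\le r^*$ and $W_t\le c^{-H}$, and invoke Azuma--Hoeffding.

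The one substantive difference is the martingale bookkeeping. The paper simply declares that $\delta_t=(G_t'-\E[G_t\mid\phi(\sigma),H_{t-1}])+(G_t-\E[G_t\mid\sigma,H_{t-1}])$ is a martingale difference sequence and applies Azuma--Hoeffding directly. You correctly point out that $G_t$ and $W_tG_t$ are $\mathcal F_{t+H-1}$-measurable, so $\delta_t$ is not adapted to $\{\mathcal F_t\}$, and conditioning on $\mathcal F_{t+H-2}$ does \emph{not} give mean zero; the paper's proof elides this. Your residue-class-mod-$H$ decomposition is a clean fix: within each class the windows are disjoint, the coarsened filtration $\{\mathcal F_{t+H-1}\}$ makes the restricted sequence a genuine bounded MDS, and a union bound over the $H$ classes recovers concentration at the cost of an extra $H$-dependent constant, which is irrelevant for consistency. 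So your argument is in fact more careful than the paper's on this point.

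One minor correction: your truncation bound should be $O(c^{-H}H^2/T)$ rather than $O(H/T)$, since there are $H-1$ truncated windows and each importance-weighted summand can be as large as $c^{-H}Hr^*$; this matches the paper's $2r^*H^2/T$ in spirit (the paper also drops the $c^{-H}$ there) and of course still vanishes as $T\to\infty$.
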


\begin{theorem}
\label{thm:inf_estimate}
There is a consistent estimator for the case where $\gamma < 1$ and $H=\infty$.
\end{theorem}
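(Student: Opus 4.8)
The plan is to reduce the infinite-horizon case to the finite-horizon estimator of Theorem~\ref{thm:h-step_estimate} by running Algorithm~\ref{alg:estimate} with an evaluation horizon $H_T$ that is permitted to grow slowly with the amount of experience $T$, and then to trade off the resulting truncation bias against the estimation variance. Concretely, fix a constant $\beta > 0$ depending only on the exploration floor $c$, set $H_T = \max\{1, \lfloor \beta \log T \rfloor\}$, let $\hat{\regret}_T$ be the output of Algorithm~\ref{alg:estimate} run with horizon $H_T$ (averaging, to avoid boundary bookkeeping, only over the $t$ for which a full $H_T$-step window is available), and let $\regret^{(H)}_T$ be the $H$-truncated deviation regret, i.e.\ Eq.~\ref{eq:dev_regret_def} with each inner sum cut at $H$ terms. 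By the triangle inequality $|\regret_T(\phi,\agent,\env) - \hat{\regret}_T| \le |\regret_T(\phi,\agent,\env) - \regret^{(H_T)}_T| + |\regret^{(H_T)}_T - \hat{\regret}_T|$, and I would show each term vanishes --- the first deterministically, the second in probability.

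For the bias term: since the rewards are bounded, say $|R_i| \le R_{\max}$, the discounted tail of any return that an $H$-truncation discards is at most $\sum_{i \ge t+H}\gamma^{i-t}R_{\max} = \gamma^{H}R_{\max}/(1-\gamma)$. Applying this to both the deviation return and the agent return in Eq.~\ref{eq:dev_regret_def} and averaging over $t$ gives $|\regret_T(\phi,\agent,\env) - \regret^{(H_T)}_T| \le 2\gamma^{H_T}R_{\max}/(1-\gamma)$, which tends to $0$ because $H_T \to \infty$ and $\gamma < 1$. This is exactly the step that uses the hypothesis $\gamma < 1$; without it the $\infty$-horizon target need not even be well defined.

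For the estimation term: this is the quantity already controlled in the proof of Theorem~\ref{thm:h-step_estimate} (and by the finite-sample bound in the appendix), so I would reuse that argument while keeping its horizon dependence explicit. Each per-step importance ratio $\phi(\pi_i)(a_i \mid h_{i-1})/\pi_i(a_i \mid h_{i-1})$ is at most $1/c$ by the exploration hypothesis, so each centred summand $M_t$ of $\regret^{(H_T)}_T - \hat{\regret}_T$ satisfies $|M_t| \le 2 c^{-H_T} R_{\max}/(1-\gamma)$ almost surely; moreover $\E[M_t \mid \mathcal{F}_{t-1}] = 0$ for the natural filtration (ordinary importance sampling over the $H_T$-step window is unbiased for the truncated deviation return, and the truncated agent return estimate is trivially unbiased), and $M_s$ and $M_t$ are uncorrelated once $|s-t| \ge H_T$ because their windows no longer overlap. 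Hence $\mathrm{Var}\big(\tfrac1T\sum_t M_t\big) \le 8 H_T c^{-2H_T} R_{\max}^2 / \big(T(1-\gamma)^2\big)$, and Chebyshev's inequality gives $\PP\big(|\regret^{(H_T)}_T - \hat{\regret}_T| > \varepsilon\big) \le 8 H_T c^{-2H_T} R_{\max}^2 / \big(T(1-\gamma)^2\varepsilon^2\big)$. Choosing $\beta < 1/(2\log(1/c))$ forces $c^{-2H_T} \le T^{2\beta\log(1/c)} = o(T/\log T)$, so this probability tends to $0$; together with the vanishing bias, $\PP\big(|\regret_T(\phi,\agent,\env) - \hat{\regret}_T| \le \varepsilon\big) \to 1$, which is the claim. (When $|\Action| = 1$ there is a single policy, every deviation regret is identically $0$, and the estimator is trivially consistent, so we may assume $c < 1$ and hence $\log(1/c) > 0$.)

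I expect the main obstacle to be the tension built into the choice of $H_T$: driving the truncation bias to $0$ requires $H_T \to \infty$, while the importance weights inject a factor exponential in $H_T$ into the variance, which requires $H_T = O(\log T)$; one must check that a single schedule satisfies both, which works only because the bias decays geometrically in $H_T$ (hypothesis $\gamma < 1$) whereas the variance penalty is merely exponential in $H_T$ and polynomial in $1/c$. A secondary, more routine point is the dependence among the overlapping $H_T$-step windows, handled exactly as in the proof of Theorem~\ref{thm:h-step_estimate}: $\E[M_t \mid \mathcal{F}_{t-1}] = 0$ together with vanishing correlation at lag $\ge H_T$ suffices for Chebyshev, and a blocking/Azuma argument would upgrade this to an exponential tail and an explicit finite-sample rate if wanted.
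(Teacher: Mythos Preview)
Your proposal is correct and follows essentially the same strategy as the paper: let the evaluation horizon grow logarithmically with $T$ (the paper takes $\delta(T)=T^{-1/|4\ln c|}$, which gives $H\approx \tfrac{\ln T}{4\ln(1/c)}$, i.e.\ exactly your $\beta<1/(2\log(1/c))$ regime), then split the error into a truncation bias of order $\gamma^{H_T}/(1-\gamma)$ and an estimation error controlled by the $c^{-H_T}$ blow-up of the importance weights, and check that the logarithmic schedule kills both. The only substantive difference is the concentration step: the paper argues via Azuma--Hoeffding on an asserted martingale difference sequence, whereas you use a second-moment bound together with the observation that $M_s$ and $M_t$ are uncorrelated once $|s-t|\ge H_T$; your route is a touch more explicit about the overlapping-window dependence and yields the same consistency conclusion (at the cost of only a Chebyshev rather than exponential tail, which you correctly note could be recovered by blocking).
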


\subsection{Illustrative Experiments}
In this section, we present an illustrative experiment that demonstrates the utility of deviation regret as a measure of evaluation for continual learning agents. Traditional evaluation measures, such as average rewards or episodic returns, can indicate when the agent's learning degrades or when the agent stops learning; a drop in the average rewards would indicate that. However, they fall short in characterizing those failures. i.e., they do not answer questions such as, \emph{Was there a better policy that would have been more effective, but the agent failed to find it?}. Deviation regret, on the other hand, addresses such questions, providing more insight into the agent's behaviour. 

The goal of our experimental analysis is threefold: First, to show that current RL algorithms, developed around traditional RL foundations, often fail in continual learning settings. Second, showing that deviation regret can identify those failures. i.e., there is a positive deviation regret when those failures happen. Third, showing that there exists a policy, representable by the agent, that would have avoided those failures. 

\begin{wrapfigure}{r}{0.4\textwidth}
    \centering
    \includegraphics[width=0.4\columnwidth]{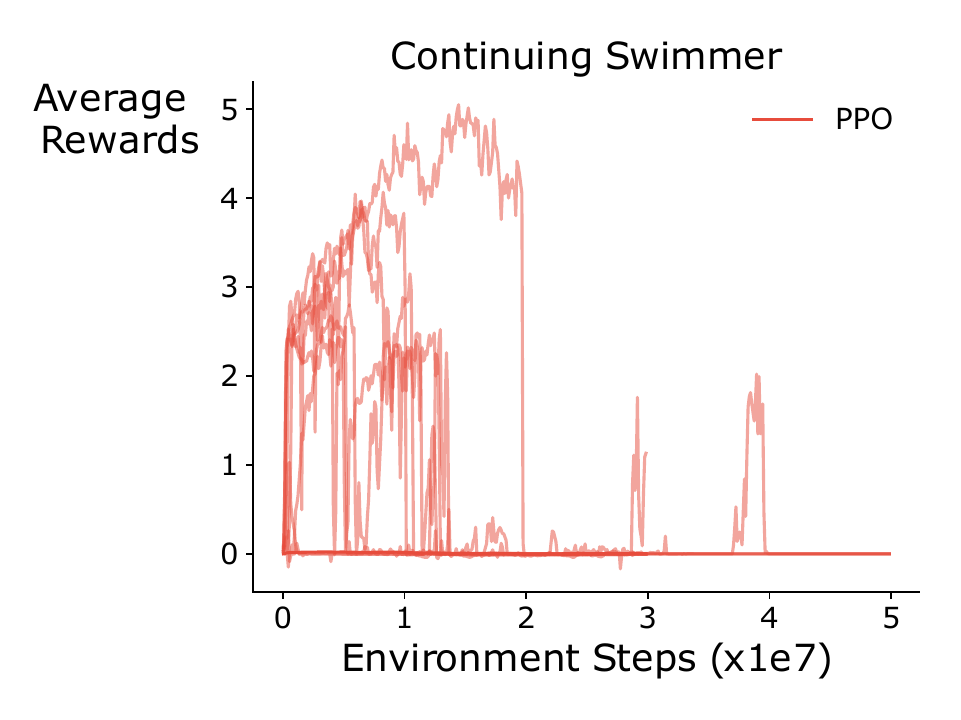}
    \caption{PPO fails to continually learn in the Continuing Swimmer environment. We show that across different random seeds, the agent loses the ability to learn after around $20$M steps.}
    \label{fig:swimmer_runs}
\end{wrapfigure}

\textbf{Current algorithms fail to continually learn.}
To study agents' behaviors when there is no repeatability or resets in the environment, we modified the Swimmer environment from Mujoco~\citep{todorov2012mujoco} and turned it into a continuing task. 
We then trained a PPO~\citep{schulman2017proximal} agent in this Continuing Swimmer environment for $50$ million steps and repeated the experiment using $10$ different seeds. We show the hyperparameters of the PPO agent in Table~\ref{tab:ppo_hypers}, which are based on the commonly used hyperparameters for PPO with Mujoco environments~\citep{shengyi2022the37implementation}.
Figure~\ref{fig:swimmer_runs} shows the results of this experiment, where across all seeds, agents started learning for some time, and then they all failed. While some seeds managed to learn for longer than others, after $20$ million steps of interaction with the environment, all agents had already failed to continue learning.

\textbf{There exists a deviation policy when agents fail to learn.}
For the second and third goals of this experiment, we wanted to show that deviation regret identifies the agent's failure and to show that there exists a policy representable by the agent that would have avoided such failure. We can achieve these two goals simultaneously by choosing a deviation set that is representable by the agent. Then, if there is a positive regret had the agent used any deviation policy from this deviation set, we can easily conclude that those two goals are achieved. 

\begin{figure}[htb]
    \centering
    \includegraphics[width=\linewidth]{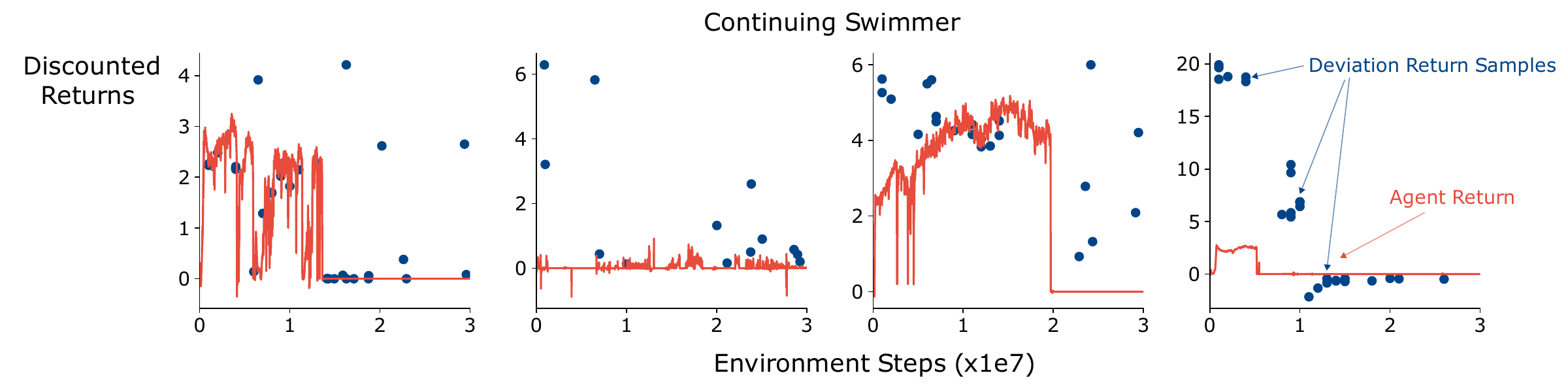}
    \caption{We show that when agents fail to learn continually, they have a positive deviation regret. Each red line represents one of the seeds for the previously shown agent in Figure \ref{fig:swimmer_runs}; here, we are showing the agent's discounted H-step return. i.e, the second term in Eq.\ref{eq:dev_regret_def}. The blue dots represent samples of the deviation return, the first term in Eq.\ref{eq:dev_regret_def}, from the best policy in the deviation set. When the blue dots are above the red lines, that means following the deviation policy would have resulted in a better return than the one achieved by the agent. Note that the best deviation policy is representable by the agent, given how we constructed the deviation set.}
    \label{fig:dev_returns}
\end{figure}

To ensure that the agent has the capacity to represent the policies in the deviation set,
we constructed the deviation set, $\Dev$, from different checkpoints of the neural network weights. Each checkpoint contains a snapshot of the network parameters at different points during learning. We then estimated the deviation regret of the agents had they used any of these deviation policies. Finally, we selected the best deviation policy for each agent and sampled estimates for its return starting from various history points. i.e, sampled estimates of the deviation return in Eq.~\ref{eq:dev_regret_def}.
Figure~\ref{fig:dev_returns} shows the discounted H-step return for the agents, and the sampled deviation returns starting from different histories. When the deviation return sample is higher than the agent's return, then there is a positive deviation regret. We can see that when agents fail, the return from the deviation policy is almost always higher than the agent's return, meaning that if the agent had used this deviation policy, it wouldn't have failed. 

\section{Discussion}~\label{app:objection}
We now address a number of objections
that can be raised against this notion of deviation regret and the history process formalism.
\paragraph{Deviations give an alternative and unknowable sequence of worlds.} A potential challenge is that systematically applying a deviation would change the distribution of worlds encountered by the agent, which is an unknowable counterfactual.  A critical distinction in the choice of deviation regret is that we are not doing {\em policy regret}~\citep{AroraEtAl12-adaptive}, where the environment within which the deviation's return is evaluated is affected by the applied deviation.  However, we also are not making any ``oblivious adversary'' assumption that the distribution of worlds is not impacted by the agent's actions, i.e., we have an adaptive adversary.  Typically, this setting is met with responses such as external regret does not admit any natural interpretation when the adversary is adaptive \citep{AroraEtAl18-policy}.  The interpretation though is clear, it reflects how much the agent would prefer to have applied the deviation to its policy under the sequence of worlds it actually found itself in; whether that is a natural interpretation seems at least debatable.  Note that a similar choice is made in off-policy reinforcement learning, where the excursion setting considers the target policy's effect on future states and rewards from the distribution of states visited by the behavior policy rather than correcting the distribution to fit the target policy's distribution if it were to be followed~\citep{SuttonEtAl16-emphatic}.
Furthermore, there are settings where vanishing external regret implies vanishing policy regret~\citep{AroraEtAl18-policy}, which are exactly recovered in games where this notion was first explored.  Most importantly, though, this approach does not need the unknowable counterfactual.

This distinction between policy regret and deviation regret can be observed with an environment that is constructed as a two-state MDP.  The actions are {\sc stay} or {\sc switch}, which deterministically cause their respective transition.  The reward for {\sc switch} is always $-10$ while the reward for {\sc stay} is $+1$ in state 1, and $+2$ in state 2 (the initial state), and $\gamma = \frac{1}{2}$.  Policy regret would compare any agent to the policy that always chooses {\sc stay} never leaving the initial state and its discounted return is $4$.  However, an agent that followed this policy does not guarantee no policy regret (or deviation regret for many deviations), as the adversary could just as easily set the reward for {\sc switch} and {\sc stay} in state 1 high enough for it to suffer linear regret.  Now consider an agent that avoids this outcome via doing some degree of exploration.  At some point it will end up in state 1, and once in state 1 the best policy to maximize future discounted return is to {\sc stay} forever for a return of $+2$.  Policy regret would consider this a poor outcome.  However, does it really make sense to look back in time and compare the agent's future behavior from state 1 to what would have been possible if it had never ever taken the {\sc switch} action to leave state 2?  Once in state 1, the comparison should be to what can be done to maximize discounted return in the world it finds itself in.  That is the heart of deviation regret.  Finally, note that $\gamma$ (or the evaluation horizon $H$) is playing a significant role in the notion of deviation regret.\footnote{This is in contrast to the ``futility of discounting in continuing problems'' from \citet[p. 254]{SuttonBarto18-rlbook}, where the choice of discount factor is shown not to affect the agent's objective.  The difference from our treatment is their appeal to a {\em stationary distribution}, which requires an ergodicity assumption on the environment we explicitly avoid.  Maybe discounting in a continuing problem such as our history process is not``futile'' after all?}  If $\gamma$ was large enough, the optimal policy would, in fact be to {\sc switch} back to state 2 and {\sc stay} forever.  And in such a case, policy regret and deviation regret would coincide.

\paragraph{Deviation regret does not order agents.}
A desirable property of an evaluation criteria is that you can use it to order agents.  We might desire to say that if 
$
\max_{\dev\in\Dev} \regret(\dev, \agent, \env) < 
\max_{\dev\in\Dev} \regret(\dev, \agent', \env),
$
then $\agent$ is preferred to $\agent'$ in environment $\env$.  However, this doesn't mean what it appears to mean.  Agent $\agent$ likely observes a different sequence of histories, and so a different distribution of worlds, compared to $\agent'$, and as a result, it is not at all clear what it would mean to compare the deviation regret over those worlds.  Notice that the above notion of policy regret allows for this kind of comparison since the comparator in the regret term does not depend on the agent at all.  This is a fair objection.  It does not seem possible to construct an intuitive total ordering using these criteria (however, note that it does seem possible to make an intuitive partial ordering).  Deviation regret is best used to judge if an agent is adapting effectively and to do so without making assumptions on the environment (e.g., assuming the environment is a finite ergodic MDP, where effective adaptation would necessarily converge to the MDP's optimal policy). 
Empirical leaderboards and benchmarks may still need to resort to expected discounted return on an environment.  However, that approach has its own weaknesses, particularly if we do not require ergodicity assumptions.

We can observe these different weaknesses in a simple environment where an agent must choose between {\sc left} or {\sc right} as its first action.  Suppose {\sc left} deterministically results in the agent playing repeated games of rock-paper-scissors against an opponent that always chooses {\sc rock}, so that there is a simple learning problem.  While {\sc right} results in the agent playing repeated games of Go against a strong but imperfect opponent, so there is a challenging learning problem.  The agent is completely uninformed in this decision.  However, considering simple expected discounted return on this environment, an agent that defaults to choosing its first action as its first decision will most definitely outperform any agent that orders its actions differently or chooses randomly.  This is true even if this alternative agent is extremely capable at learning, and manages to eventually learn to win the majority of its games of Go.  Deviation regret, instead evaluates agents by whether they are effectively adapted, relative to some set of deviations, to the worlds in which they find themselves --- whether that be a simple to learn rock-paper-scissors setting or a challenging game of Go.  Since the above two agents don't see the same distribution of such worlds, it makes little sense to order the agents by this criteria.  However, it makes equally little sense to order them by how they make one completely uninformed decision, which would dominate any expected return assessment.

\paragraph{Deviation Regret encourages agents to reach a place where no learning is possible.}
We can avoid reaching places where no learning is possible by sublinearly increasing the evaluation horizon H. So even if the agent reaches a ``no learning place'', it will incur deviation regret that encourages it to change its policy and eventually get out of it. However, if no deviation policy incurs a deviation regret along the ``no learning path'', then the agent is doing the best it can given the world it found itself in.

\section{Conclusion and Future Work}
In this paper, we described four foundations of traditional RL that are antithetical to the goals of continual reinforcement learning.  Further, we presented the underpinnings of an alternative set of foundations that better conceptualize the challenges faced within continual learning.  
More excitingly, these foundations seem to suggest a new approach to agent and algorithm design.  This will also entail the development of suitable benchmark environments that embrace these alternative foundations. 


\section*{Appendix}
\label{sec:theorem_1_proof}


Here, we give a proof of Theorem~\ref{thm:h-step_estimate} and provide an outline for the proof of Theorem~\ref{thm:inf_estimate}, establishing the consistency of the estimators. The complete proof of Theorem~\ref{thm:inf_estimate} can be found in the Supplementary Materials~(\ref{sec:inf_consistent_proof}).
The proofs primarily rely on the concentration of the return estimates around the true returns, shown by identifying a relevant martingale and applying the Azuma-Hoeffding inequality.
The above statement is true for idealized return estimates that can access future data, while in practice such data is not available.
Therefore we provide an error analysis of this difference and establish that as we see more data, the contribution of this error diminishes to $0$.
Following these two ideas, we can prove the consistency of the $H$-step deviation regret estimator.
The infinite discounted return case requires some additional care, which we discuss later.
In the remainder of this section we introduce the relevant notation, state some basic results used, and then give the proofs.

Any behaviour $f: \Hist \to \Delta(\Action)$ in an environment $e$ induces a distribution over trajectories.
Let $R_t$ be the random reward at timestep $t$.
We define $G_t = \sum_{i=t}^{t+H-1} \gamma^{i-t} R_i$, the random $H$-step discounted return from timestep $t$. Furthermore, $H_t$ denotes the random histories induced by the agent in the environment.
Then, the deviation regret may be re-expresed as
\begin{equation}
\regret_T(\phi, \agent, \env) =
    \frac{1}{T}\sum_{t=1}^{T} 
    \E \left[G_t | \phi(\sigma), H_{t-1} \right] - 
    \E \left[G_t | \sigma, H_{t-1} \right].
\end{equation}

Recall, when estimating the returns in Algorithm~\ref{alg:estimate}, we only had access to data up to timestep $T$, that is, the return estimates for the last $H$ steps are truncated. To capture these, define
\[
    G_t^{[T]} = \sum_{i=t}^{\min(T, t+H-1)} \gamma^{i-t} R_i
    \quad \text{ and } \quad 
    W_t^{[T]} = \prod_{i=t}^{\min(T, t+H-1)} \frac{\phi(\pi_i)(A_i | H_{i-1})}{\pi_i(A_i | H_{i-1})},
\]
where $A_t$ is the random action of the agent in timestep $t$, $G_t^{[T]}$ is the truncated agent return estimate, and ${G_t'}^{[T]} = W_t^{[T]} G_t^{[T]}$ the truncated deviation return estimate. Recall, $\pi_i$ is the policy used by the agent in timestep $i$.
The non-truncated (idealized) agent return estimate is captured by $G_t$, and, defining
$W_t = \prod_{i=t}^{t+H-1} \frac{\phi(\pi_i)(a_i | h_{i-1})}{\pi_i(a_i | h_{i-1})}$, $G_t' = W_t G_t$ is the non-truncated (idealized) deviation return estimate. Note, an apostrophe (or prime) denotes a quantity relevant to the deviation. With this,
\[
    \hat \regret_T (\phi, \agent, \env)
    = \frac1T \sum_{t=1}^T {G_t'}^{[T]} - G_t^{[T]}.
\]
We will argue through the idealized estimator
\(
    \hat \regret_T^* (\phi, \agent, \env)
    = \frac1T \sum_{t=1}^T G_t' - G_t.
\)

Finally, let $r^* = \max_{a \in \Action, o \in \Obs} |R(a, o)|$, which exists since $\Action$ and $\Obs$ are finite. Then
\begin{equation} \label{eq:g_t_bound}
    |G_t| \leq \sum_{i=t}^{t+H-1} \gamma^{i-t} |R_i| \leq Hr^*,
\end{equation}
and for an agent that takes every action with probability at least $c$ in every timestep
\begin{equation} \label{eq:g_t_prime_bound}
    |G_t'|
    = | W_t | \cdot | G_t|
    \leq \left |\prod_{i=t}^{t+H-1} \frac{\phi(\pi_i)(a_i | h_{i-1})}{\pi_i(a_i | h_{i-1})} \right| Hr^*
    \leq c^{-H}Hr^*.
\end{equation}

\begin{proof}[Proof of Theorem~\ref{thm:h-step_estimate}]
Fix any $\phi, \env$, and $\agent$ as in the theorem statement. Let $H_t$ be as above. For brevity, we will simply denote the deviation regret by $\regret_T$, and the estimates as $\hat \regret_T$ and $\hat \regret_T^*$.
We decompose the estimator error as
\begin{equation} \label{eq:thm1:err_decomp}
    |\regret_T -  \hat\regret_T|
    = |\regret_T - \hat \regret_T^* + \hat \regret_T^* - \hat\regret_T|
    \leq   |\regret_T - \hat \regret_T^* |
        +  | \hat \regret_T^* - \hat \regret_T |,
\end{equation}
where we used the triangle inequality. The first term is the estimation error for the idealized estimate, the second term is the difference between the idealized and the truncated estimates. We bound each of these separately.

\paragraph{Idealized Estimator Error.}
Consider
\[
    \Delta_T = T(\regret_T -  \hat\regret_T^*) = \sum_{t=1}^T (G_t' - \E[G_t | \phi(\sigma), H_{t-1}]) + (G_t - \E[G_t | \sigma, H_{t-1}]).
\]
Let $\delta_t = (G_t' - \E[G_t | \phi(\sigma), H_{t-1}]) + (G_t - \E[G_t | \sigma, H_{t-1}])$, the terms in the sum. We will show that $\delta_t$ is a martingale difference sequence (MDS), hence $\Delta_T$ is a martingale. Towards this, we need $\E[\delta_t | \sigma, H_{t-1}] = 0$ and $|\delta_t|$  bounded. We have
\begin{align*}
    \E[\delta_t | \sigma, H_{t-1}]
    &= \E \Big[ (G_t' - \E[G_t | \phi(\sigma), H_{t-1}]) + (G_t - \E[G_t | \sigma, H_{t-1}]) \,\Big|\, \sigma, H_{t-1} \Big] \\
    &= \E [ G_t' | \sigma, H_{t-1} ]  - \E \Big[ \E[G_t | \phi(\sigma), H_{t-1}] \,\Big|\, \sigma, H_{t-1} \Big]
        + \E [ G_t | \sigma, H_{t-1} ]  - \E [G_t | \sigma, H_{t-1}]
\end{align*}
by linearity of expectation. The last two terms cancel, and now we show the first two do as well. Since $\E[G_t | \phi(\sigma), H_{t-1}]$ is the expected deviation return (a constant), the second term itself is the same constant. In the first term $ G_t' = W_t G_t$, where $W_t$ is the importance sampling that corrects from the agents' behaviour to the deviation's behaviour. That is, it is well known that
\[
    \E [ G_t' | \sigma, H_{t-1} ]
    = \E [ W_t G_t | \sigma, H_{t-1} ]
    = \E [ G_t | \phi(\sigma), H_{t-1} ],
\]
and we see that the first two terms also cancel. We conclude $\E[\delta_t | \sigma, H_{t-1}] = 0$.

We bound $|\delta_t|$ by noting that each term in it is bounded. More precisely, we use Eq.~\ref{eq:g_t_bound}, Eq.~\ref{eq:g_t_prime_bound}, the triangle inequality, and that for any $X < c$, we also have $\E[X|Y] < c$ for all $Y$.
\[
    |\delta_t| 
    \leq | G_t' | + | \E[G_t | \phi(\sigma), H_{t-1}])| + |G_t| + |\E[G_t | \sigma, H_{t-1}]) |
    \leq c^{-H}Hr^* + 3Hr^*.
\]
Therefore, $\Delta_T$ is a martingale with the increments, $\delta_t$, bounded by $(c^{-H} + 3)Hr^*$. Note that $\Delta_0 = 0$. By the Azuma-Hoeffding inequality, we conclude that for any $\epsilon_1 > 0$ and $T \geq 1$
\begin{equation*}
    \PP(|\Delta_T| \geq \epsilon_1) \leq 2\exp \left( \frac{- \epsilon_1^2}{2T \big((c^{-H} + 3)Hr^*\big)^2} \right).
\end{equation*}
Since $|\Delta_T| \geq \epsilon_1$ is equivalent to $|\regret_T -  \hat\regret_T^*| \geq \epsilon_1/T$, letting $\epsilon_2 = \epsilon_1 / T$ we can restate the above as
\begin{equation} \label{eq:thm1:hp_bound}
    \PP(|\regret_T -  \hat\regret_T^*| \geq \epsilon_2)
    \leq 2\exp \left( \frac{-(\epsilon_2 T)^2}{2T \big((c^{-H} + 3)Hr^*\big)^2} \right)
    = 2\exp \left( \frac{-\epsilon_2^2 T}{2 \big((c^{-H} + 3)Hr^*\big)^2} \right).
\end{equation}
We see that the idealized estimate gets close to the true deviation at an exponential rate. This completes the bound on the first error term.

\paragraph{Error due to truncating the estimate.}
By definition and the triangle inequality
\[
    T | \hat \regret_T^* - \hat \regret_T |
    = \left| \sum_{t=1}^T ({G_t'}^{[T]} - G_t^{[T]}) - (G_t' - G_t) \right|
    \leq \sum_{t=1}^T | {G_t'}^{[T]} - G_t'| + |G_t^{[T]} - G_t |.
\]
Both  $\sum_{t=1}^T | {G_t'}^{[T]} - G_t'|$ and $\sum_{t=1}^T |G_t^{[T]} - G_t |$ are bounded by $r^*H^2$. It is only the last $H$ terms that are truncated, therefore all other terms in the sum are 0. Each of the non-zero terms are no more than $H$-step returns of rewards no more than $r^*$. Plugging this into the inequality we developed so far, we find
\begin{equation} \label{eq:thm1:trunc_err_bound}
    | \hat \regret_T^* - \hat \regret_T | \leq \frac{2r^* H} {T (1-\gamma)}.
\end{equation}
As this is a uniform bound over all realizations of the random estimates, for any $\epsilon_3 > 0$, choosing $T \geq \frac{2r^*H}{\epsilon_3 (1-\gamma)}$, we have $| \hat \regret_T^* - \hat \regret_T | \leq \epsilon_3$.

\paragraph{Combining the Error Estimates.}
Using the error decomposition in Eq.~\ref{eq:thm1:err_decomp}, as well as the bounds Eq.~\ref{eq:thm1:hp_bound} and Eq.~\ref{eq:thm1:trunc_err_bound} developed for each error component, we can conclude that for any $\epsilon > 0$, $T \geq \frac{2r^*H}{(\epsilon/2) (1-\gamma)}$,
\[
    \PP(|\regret_T -  \hat\regret_T| \leq \epsilon)
    \geq 1 - 2\exp \left( \frac{-(\epsilon/2)^2 T}{2(c^{-H} + 3)Hr^*} \right).
\]
Letting $T \to \infty$, we conclude with the result we set out to prove,
\[
    \lim_{T\to\infty} \PP (
        |\regret_T  - \hat \regret_T | \leq \varepsilon ) = 1.
    \qedhere
\]
\end{proof}


\subsubsection*{Acknowledgments}
\label{sec:ack}
The authors would like to thank a number of colleagues whose insights refined this work, including John Martin, Dustin
Morrill, and David Sychrovsky. This work was supported by Amii, the Canada CIFAR
AI Chairs program, NSERC, and the Google DeepMind Chair in Artificial Intelligence. We would also like to thank the Digital Research Alliance of Canada for the
compute resources.


\bibliography{refs}

\begin{thebibliography}{27}
\providecommand{\natexlab}[1]{#1}
\providecommand{\url}[1]{\texttt{#1}}
\expandafter\ifx\csname urlstyle\endcsname\relax
  \providecommand{\doi}[1]{DOI: #1}\else
  \providecommand{\doi}{DOI: \begingroup \urlstyle{rm}\Url}\fi

\bibitem[Abel et~al.(2024{\natexlab{a}})Abel, Barreto, Van~Roy, Precup, van Hasselt, and Singh]{AbelEtAl23-definition}
David Abel, Andr\'{e} Barreto, Benjamin Van~Roy, Doina Precup, Hado van Hasselt, and Satinder Singh.
\newblock A definition of continual reinforcement learning.
\newblock In \emph{Advances in Neural Information Processing Systems (NeurIPS)}, 2024{\natexlab{a}}.

\bibitem[Abel et~al.(2024{\natexlab{b}})Abel, Ho, and Harutyunyan]{abel2024dogmasreinforcementlearning}
David Abel, Mark~K. Ho, and Anna Harutyunyan.
\newblock Three dogmas of reinforcement learning.
\newblock \emph{Reinforcement Learning Journal}, 2024{\natexlab{b}}.

\bibitem[Arora et~al.(2012)Arora, Dekel, and Tewari]{AroraEtAl12-adaptive}
Raman Arora, Ofer Dekel, and Ambuj Tewari.
\newblock Online bandit learning against an adaptive adversary: from regret to policy regret.
\newblock In \emph{Proceedings of International Conference on Machine Learning (ICML)}, 2012.

\bibitem[Arora et~al.(2018)Arora, Dinitz, Marinov, and Mohri]{AroraEtAl18-policy}
Raman Arora, Michael Dinitz, Teodor~V. Marinov, and Mehryar Mohri.
\newblock Policy regret in repeated games.
\newblock In \emph{Advances in Neural Information Processing Systems (NeurIPS)}, volume~31, 2018.

\bibitem[Bellemare et~al.(2013)Bellemare, Naddaf, Veness, and Bowling]{bellemare2013arcade}
Marc~G Bellemare, Yavar Naddaf, Joel Veness, and Michael Bowling.
\newblock The arcade learning environment: An evaluation platform for general agents.
\newblock \emph{Journal of Artificial Intelligence Research}, 2013.

\bibitem[Bellemare et~al.(2020)Bellemare, Candido, Castro, Gong, Machado, Moitra, Ponda, and Wang]{bellemare2020autonomous}
Marc~G Bellemare, Salvatore Candido, Pablo~Samuel Castro, Jun Gong, Marlos~C Machado, Subhodeep Moitra, Sameera~S Ponda, and Ziyu Wang.
\newblock Autonomous navigation of stratospheric balloons using reinforcement learning.
\newblock \emph{Nature}, 2020.

\bibitem[Bowling et~al.(2023)Bowling, Martin, Abel, and Dabney]{BowlingEtAl2023-settling}
Michael Bowling, John~D. Martin, David Abel, and Will Dabney.
\newblock Settling the reward hypothesis.
\newblock In \emph{Proceedings of the Fortieth International Conference on Machine Learning (ICML)}, pp.\  3003--3020, 2023.

\bibitem[Dong et~al.(2022)Dong, Van~Roy, and Zhou]{dong2022simple}
Shi Dong, Benjamin Van~Roy, and Zhengyuan Zhou.
\newblock Simple agent, complex environment: Efficient reinforcement learning with agent states.
\newblock \emph{Journal of Machine Learning Research}, 2022.

\bibitem[Huang et~al.(2022)Huang, Dossa, Raffin, Kanervisto, and Wang]{shengyi2022the37implementation}
Shengyi Huang, Rousslan Fernand~Julien Dossa, Antonin Raffin, Anssi Kanervisto, and Weixun Wang.
\newblock The 37 implementation details of proximal policy optimization.
\newblock In \emph{ICLR Blog Track}, 2022.

\bibitem[Hutter(2000)]{hutter2000theory}
Marcus Hutter.
\newblock A theory of universal artificial intelligence based on algorithmic complexity.
\newblock \emph{arXiv preprint cs/0004001}, 2000.

\bibitem[Javed \& Sutton(2024)Javed and Sutton]{javed2024the}
Khurram Javed and Richard~S. Sutton.
\newblock The big world hypothesis and its ramifications for artificial intelligence.
\newblock In \emph{Finding the Frame: An RLC Workshop for Examining Conceptual Frameworks}, 2024.

\bibitem[Khetarpal et~al.(2022)Khetarpal, Riemer, Rish, and Precup]{khetarpal2022towards}
Khimya Khetarpal, Matthew Riemer, Irina Rish, and Doina Precup.
\newblock Towards continual reinforcement learning: A review and perspectives.
\newblock \emph{Journal of Artificial Intelligence Research}, 2022.

\bibitem[Kumar et~al.(2023)Kumar, Marklund, Rao, Zhu, Jeon, Liu, and Van~Roy]{kumar2023continual}
Saurabh Kumar, Henrik Marklund, Ashish Rao, Yifan Zhu, Hong~Jun Jeon, Yueyang Liu, and Benjamin Van~Roy.
\newblock Continual learning as computationally constrained reinforcement learning.
\newblock \emph{arXiv preprint arXiv:2307.04345}, 2023.

\bibitem[Mnih et~al.(2015)Mnih, Kavukcuoglu, Silver, Rusu, Veness, Bellemare, Graves, Riedmiller, Fidjeland, Ostrovski, Petersen, Beattie, Sadik, Antonoglou, King, Kumaran, Wierstra, Legg, and Hassabis]{Mnih2015HumanlevelCT}
Volodymyr Mnih, Koray Kavukcuoglu, David Silver, Andrei~A. Rusu, Joel Veness, Marc~G. Bellemare, Alex Graves, Martin~A. Riedmiller, Andreas~Kirkeby Fidjeland, Georg Ostrovski, Stig Petersen, Charlie Beattie, Amir Sadik, Ioannis Antonoglou, Helen King, Dharshan Kumaran, Daan Wierstra, Shane Legg, and Demis Hassabis.
\newblock Human-level control through deep reinforcement learning.
\newblock \emph{Nature}, 2015.

\bibitem[Morav{\v{c}}{\'\i}k et~al.(2017)Morav{\v{c}}{\'\i}k, Schmid, Burch, Lis{\`y}, Morrill, Bard, Davis, Waugh, Johanson, and Bowling]{moravvcik2017deepstack}
Matej Morav{\v{c}}{\'\i}k, Martin Schmid, Neil Burch, Viliam Lis{\`y}, Dustin Morrill, Nolan Bard, Trevor Davis, Kevin Waugh, Michael Johanson, and Michael Bowling.
\newblock Deepstack: Expert-level artificial intelligence in heads-up no-limit poker.
\newblock \emph{Science}, 2017.

\bibitem[Morrill et~al.(2021{\natexlab{a}})Morrill, D'Orazio, Lanctot, Sarfati, Wright, Greenwald, and Bowling]{MorrillEtAl21-efficient}
Dustin Morrill, Ryan D'Orazio, Marc Lanctot, Reca Sarfati, James Wright, Amy Greenwald, and Michael Bowling.
\newblock Efficient deviation types and learning for hindsight rationality in extensive-form games.
\newblock In \emph{Proceedings of the Thirty-Eighth International Conference on Machine Learning (ICML)}, pp.\  7818--7828, 2021{\natexlab{a}}.

\bibitem[Morrill et~al.(2021{\natexlab{b}})Morrill, D'Orazio, Sarfati, Lanctot, Wright, Greenwald, and Bowling]{MorrillEtAl21-hindsight}
Dustin Morrill, Ryan D'Orazio, Reca Sarfati, Marc Lanctot, James~R. Wright, Amy Greenwald, and Michael Bowling.
\newblock Hindsight and sequential rationality of correlated play.
\newblock In \emph{Proceedings of the Thirty-Fifth Conference on Artificial Intelligence (AAAI)}, 2021{\natexlab{b}}.

\bibitem[Morrill et~al.(2022)Morrill, Greenwald, and Bowling]{MorrillEtAl22-partially}
Dustin Morrill, Amy~R. Greenwald, and Michael Bowling.
\newblock The partially observable history process.
\newblock In \emph{The AAAI Workshop on Reinforcement Learning in Games}, 2022.

\bibitem[Platanios et~al.(2023)Platanios, Saparov, and Mitchell]{plataniosjelly}
Emmanouil~Antonios Platanios, Abulhair Saparov, and Tom Mitchell.
\newblock Jelly bean world: A testbed for never-ending learning.
\newblock In \emph{International Conference on Learning Representations}, 2023.

\bibitem[Ring(1994)]{ring1994continual}
Mark~Bishop Ring.
\newblock \emph{Continual learning in reinforcement environments}.
\newblock The University of Texas at Austin, 1994.

\bibitem[Schulman et~al.(2017)Schulman, Wolski, Dhariwal, Radford, and Klimov]{schulman2017proximal}
John Schulman, Filip Wolski, Prafulla Dhariwal, Alec Radford, and Oleg Klimov.
\newblock Proximal policy optimization algorithms.
\newblock \emph{arXiv preprint arXiv:1707.06347}, 2017.

\bibitem[Silver et~al.(2016)Silver, Huang, Maddison, Guez, Sifre, van~den Driessche, Schrittwieser, Antonoglou, Panneershelvam, Lanctot, Dieleman, Grewe, Nham, Kalchbrenner, Sutskever, Lillicrap, Leach, Kavukcuoglu, Graepel, and Hassabis]{silvermastering}
David Silver, Aja Huang, Christopher~J. Maddison, Arthur Guez, Laurent Sifre, George van~den Driessche, Julian Schrittwieser, Ioannis Antonoglou, Veda Panneershelvam, Marc Lanctot, Sander Dieleman, Dominik Grewe, John Nham, Nal Kalchbrenner, Ilya Sutskever, Timothy Lillicrap, Madeleine Leach, Koray Kavukcuoglu, Thore Graepel, and Demis Hassabis.
\newblock Mastering the game of go with deep neural networks and tree search.
\newblock \emph{Nature}, 2016.

\bibitem[Sutton \& Barto(2018)Sutton and Barto]{SuttonBarto18-rlbook}
Richard~S. Sutton and Andrew~G. Barto.
\newblock \emph{Reinforcement Learning: An Introduction}.
\newblock The MIT Press, 2018.

\bibitem[Sutton et~al.(2007)Sutton, Koop, and Silver]{trackingsutton}
Richard~S. Sutton, Anna Koop, and David Silver.
\newblock On the role of tracking in stationary environments.
\newblock In \emph{International Conference on Machine Learning}, 2007.

\bibitem[Sutton et~al.(2016)Sutton, Mahmood, and White]{SuttonEtAl16-emphatic}
Richard~S. Sutton, A.~Rupam Mahmood, and Martha White.
\newblock An emphatic approach to the problem of off-policy temporal-difference learning.
\newblock \emph{Journal of Machine Learning Research}, 17:\penalty0 1--29, 2016.

\bibitem[Todorov et~al.(2012)Todorov, Erez, and Tassa]{todorov2012mujoco}
Emanuel Todorov, Tom Erez, and Yuval Tassa.
\newblock Mujoco: A physics engine for model-based control.
\newblock In \emph{International Conference on Intelligent Robots and Systems}, 2012.

\bibitem[Wurman et~al.(2022)Wurman, Barrett, Kawamoto, MacGlashan, Subramanian, Walsh, Capobianco, Devlic, Eckert, Fuchs, et~al.]{wurman2022outracing}
Peter~R Wurman, Samuel Barrett, Kenta Kawamoto, James MacGlashan, Kaushik Subramanian, Thomas~J Walsh, Roberto Capobianco, Alisa Devlic, Franziska Eckert, Florian Fuchs, et~al.
\newblock Outracing champion {G}ran {T}urismo drivers with deep reinforcement learning.
\newblock \emph{Nature}, 2022.

\end{thebibliography}
\bibliographystyle{rlj}

\beginSupplementaryMaterials
\section{There is a consistent estimator for the infinite discounted deviation regret}
\label{sec:inf_consistent_proof}
The proof for the infinite discounted deviation is analogous to the $H$-step return case, with only two changes.
\begin{enumerate}
    \item The errors in the truncated estimates are larger, albeit still bounded and independent of $T$.
    \item The importance sampling weights become unbounded unless treated carefully.
\end{enumerate}
The first change is straightforward. For the second change, we note that if we only estimate the deviation regret up to some $\delta > 0$ accuracy, we can truncate to a finite return of sufficient length, thus controlling the importance sampling weights. See Supplementary Materials~(\ref{sec:inf_consistent_proof}) for details.

For the proof of Theorem~\ref{thm:inf_estimate}, we will define
\begin{itemize}
    \item $\regret_{T,\delta}$, a truncated version of $\regret_T$ that is still $\delta$ close to the deviation regret, for any $\delta > 0$.
    \item $\hat \regret_{T, \delta}^*$, an idealized estimator of the truncated regret that can access future data.
\end{itemize}
We argue that $\hat \regret_{T, \delta}^*$ estimates $\regret_{T, \delta}$ arbitrarily well with high probability. 
However, $\hat \regret_{T, \delta}^*$ uses future data, while a real estimator, $\hat \regret_{T, \delta}$ does not have access to this, resulting in an additional error. Our analysis shows that with increasing data, this error is also driven to be arbitrarily small. Finally, requiring better and better estimates over time by setting $\delta = \delta(T)$ for some $\delta(T)$ that goes to $0$ with time, we arrive at our final estimator $\hat \regret_T = \hat \regret_{T, \delta(T)}$.
Note that $\hat \regret_T$ is not exactly Algorithm~\ref{alg:estimate} with $H = \infty$, as $\hat \regret_T$ may not immediately incorporate a new observed reward into all of its deviation return estimates.

Now that we gave an outline of the proof, we introduce all required notation. All the terms not explicitly introduced here use the definitions provided earlier.
In this section, we denote by $\PP^\agent$ the probability measure on the trajectories induced by the agent-environment interaction, and by $\E^\agent$ the corresponding expectation operator. Similarly, $\PP^\phi$ denotes the probability measure induced by applying the deviation and $\E^\phi$ its expectation operator. Effectively, $\E^\agent$ replaces $\E[ \,\cdot\, | \sigma ]$ and $\E^\phi$ replaces $\E[ \,\cdot\, | \phi(\sigma) ]$. 

We define both infinite and $H$-step returns.
\begin{align*}
    G_t^\infty &= \sum_{i=t}^\infty \gamma^{i-t} R_i, \\
    G_t^{\{H\}} &= \sum_{i=t}^{t+H-1} \gamma^{i-t} R_i.
\end{align*}
With this notation, the deviation regret with infinite returns is
\[
    \rho_T(\pi, \lambda, \env) = \frac1T \sum_{t=1}^T 
        \E^\pi[G_t^\infty | H_{t-1}] - \E^\lambda[G_t^\infty | H_{t-1}],
\]
where $H_{t-1}$ is the random history of the agent. These quantities are bounded for $\gamma < 1$.

\begin{fact} \label{fact:bounded_1}
$|G_t^\infty| \leq \frac{1}{1-\gamma} r^*$
and therefore $|\rho_T| \leq \frac{2}{1-\gamma} r^*$.
Also, as before, $|G_t^{\{H\}}| \leq H r^*$.
\end{fact}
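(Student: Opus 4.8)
The plan is to establish all three inequalities by the same elementary tactic: move the absolute value inside the sum via the triangle inequality, replace each reward magnitude by its uniform bound $r^*$ (which is finite since $\Action$ and $\Obs$ are finite), and then evaluate the resulting geometric series. No concentration or probabilistic machinery is needed here; these are deterministic, pointwise magnitude bounds that simply set up the finite-variance estimates used later.

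For the infinite return, I would write $|G_t^\infty| = \left| \sum_{i=t}^\infty \gamma^{i-t} R_i \right| \le \sum_{i=t}^\infty \gamma^{i-t} |R_i|$, bound each $|R_i| \le r^*$, reindex with $k = i - t$, and evaluate $\sum_{k=0}^\infty \gamma^k = (1-\gamma)^{-1}$. This step is the only place where the hypothesis $\gamma < 1$ genuinely matters: it is precisely the condition guaranteeing convergence of the geometric series, and hence that $G_t^\infty$ is a well-defined finite quantity to begin with. The bound on $\rho_T$ then follows immediately. Each summand of $\rho_T$ is the difference $\E^\pi[G_t^\infty | H_{t-1}] - \E^\lambda[G_t^\infty | H_{t-1}]$; since $|G_t^\infty| \le r^*/(1-\gamma)$ holds surely, monotonicity of conditional expectation transfers this same bound to each of the two conditional expectations, and the triangle inequality gives a summand magnitude of at most $2r^*/(1-\gamma)$. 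Averaging $T$ quantities each bounded by this constant leaves the bound unchanged, yielding $|\rho_T| \le 2r^*/(1-\gamma)$.

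For the $H$-step return, the identical triangle-inequality step gives $|G_t^{\{H\}}| \le r^* \sum_{i=t}^{t+H-1} \gamma^{i-t} = r^* \sum_{k=0}^{H-1} \gamma^k$, and here I would bound each term $\gamma^k \le 1$ for $\gamma \in [0,1]$, so the finite sum of $H$ terms is at most $H$, recovering $|G_t^{\{H\}}| \le H r^*$ exactly as in Eq.~\ref{eq:g_t_bound}. The point worth flagging is that this last bound needs only $\gamma \le 1$ rather than strict inequality, which is why it remains valid in the finite-horizon regime of Theorem~\ref{thm:h-step_estimate} where $\gamma = 1$ is permitted. There is no real obstacle in this Fact; the only conceptual subtlety is keeping track of which bound requires $\gamma < 1$ (the infinite-horizon ones) versus which survives at $\gamma = 1$ (the $H$-step one).
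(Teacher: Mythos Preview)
Your proposal is correct and matches the paper's intent: the paper states this as a ``Fact'' without proof, treating it as immediate from the triangle inequality and the geometric series, which is exactly the argument you give. Your observation about which bounds require $\gamma < 1$ versus $\gamma \le 1$ is a nice clarification that the paper leaves implicit.
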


We want to truncate returns, while staying close to the true values. Towards this, let, for any $\delta > 0, \gamma \in [0,1)$, let
\[
    H(\delta, \gamma) = \left\lceil \frac{
        \ln \left( \frac{ r^* }{\delta (1 - \gamma)} \right)
        } {
        1-\gamma
    } \right\rceil,
\]
the \textit{effective horizon}. Choosing $H \geq H(\delta, \gamma)$ guarantees
\[
    | G_t^\infty - G_t^{\{H\}} | < \delta.
\]
Define the truncated regret for any $\delta > 0$ as
\[
    \rho_{T, \delta}(\pi, \lambda, e) = \frac1T \sum_{t=1}^T 
        \E^\pi[G_t^{\{H(\delta, \gamma)\}} | H_{t-1}] - \E^\lambda[G_t^\infty | H_{t-1}].
\]
Then, by construction,
\begin{equation}
    |\rho_T(\pi, \lambda, e) - \rho_{T, \delta}(\pi, \lambda, e)| < \delta.
\end{equation}

To estimate the (truncated) deviation return we use importance sampling, that is
\begin{equation}
    G_t'^{\{H\}} = W_{t:t+H-1} G_t^{\{H\}}
    \quad \text{ with } \quad
    W_{t:i} = \prod_{k=t}^i \frac{\pi(A_k | H_{k-1})}{\lambda(A_k | H_{k-1})} .
\end{equation}
For an agent that takes every action in every step with probability at least $c > 0$, $W_{t:t+H-1} \leq c^{-H}$. With this, we can bound the deviation regret estimate.

\begin{fact} \label{fact:inf_IS_estimate_bounded}
$|G_t'^{\{H\}}| \leq c^{-H}Hr^*$.
\end{fact}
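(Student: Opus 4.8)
The plan is to bound $|G_t'^{\{H\}}|$ by factoring it, exactly as in the $H$-step analysis, into its importance-sampling weight and its truncated return, and then bounding each factor separately. This is the infinite-horizon counterpart of Eq.~\ref{eq:g_t_prime_bound}, restricted to the $H$-step truncation, so the structure carries over verbatim.

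First I would write, directly from the definition $G_t'^{\{H\}} = W_{t:t+H-1} G_t^{\{H\}}$, the factorization
\[
    |G_t'^{\{H\}}| = |W_{t:t+H-1}| \cdot |G_t^{\{H\}}|.
\]
The return factor is handled by Fact~\ref{fact:bounded_1}, which already supplies $|G_t^{\{H\}}| \leq Hr^*$: this is the geometric-sum estimate in which each of the $H$ discounted rewards is bounded in magnitude by $r^* = \max_{a,o}|R(a,o)|$ and $\gamma^{i-t} \leq 1$. For the weight factor I would expand $W_{t:t+H-1} = \prod_{k=t}^{t+H-1} \frac{\pi(A_k \mid H_{k-1})}{\lambda(A_k \mid H_{k-1})}$ as a product of exactly $H$ terms. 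Each numerator is an action probability and hence lies in $[0,1]$, while each denominator is at least $c$ by the sufficient-randomness hypothesis that the agent takes every action with probability at least $c > 0$ in every timestep. Thus every factor is nonnegative and at most $c^{-1}$, so the product is nonnegative and at most $c^{-H}$; in particular $|W_{t:t+H-1}| = W_{t:t+H-1} \leq c^{-H}$.

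Combining the two bounds yields
\[
    |G_t'^{\{H\}}| = |W_{t:t+H-1}| \cdot |G_t^{\{H\}}| \leq c^{-H} \cdot Hr^* = c^{-H}Hr^*,
\]
which is the claim. There is no genuine obstacle here: the entire content is the factorization together with two elementary bounds, one imported from Fact~\ref{fact:bounded_1} and one from the per-step probability floor $c$. The only point that deserves a line of care is making the $H$-fold product bound explicit and observing that the weight is nonnegative, so the absolute value around $W_{t:t+H-1}$ may simply be dropped.
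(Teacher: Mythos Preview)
Your proposal is correct and mirrors the paper's reasoning exactly: the paper remarks just before the fact that $W_{t:t+H-1} \leq c^{-H}$ under the probability floor $c$, and Fact~\ref{fact:bounded_1} supplies $|G_t^{\{H\}}| \leq Hr^*$; multiplying these gives the bound, as you do.
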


At this point, we can define $\hat \regret_{T, \delta}^*$, the idealized estimator of the truncated regret,
\begin{equation}
    \hat \regret_{T, \delta}^* (\phi, \lambda, e) = \frac1T
        \sum_{t=1}^T {G_t'}^{\{H(\delta, \gamma)\}} - G_t^\infty.
\end{equation}

However, for the practical estimator of regret at timestep $T$ all estimates will naturally be truncated at step $T$, that is
\begin{equation}
    \hat \regret_{T, \delta} (\phi, \lambda, e) = \frac1T
        \sum_{t=1}^T {G_t'}^{\{\min(T-t+1, H(\delta, \gamma))\}} - G_t^{\{T-t+1\}}.
\end{equation}

As stated in the proof outline, we will choose a $\delta(T)$ such that $\delta(T)$ decreases to $0$ in the limit as $T \to \infty$, and use $\hat \regret_{T} = \hat \regret_{T, \delta(T)}$. The particular choice we make is
\[
     \delta(T) = T^{-1/|4\ln(c)|}
\]
With this, we are ready to provide a proof for Theorem~\ref{thm:inf_estimate}.

\begin{proof}
When the deviation $\phi$, agent $\lambda$, and environment $e$ are clear from context, they are omitted from the notation.
We have seen that for any $\delta > 0$, $|\rho_T - \rho_{T, \delta}| < \delta$.
We will show that for any  $\epsilon > 0$,
\begin{equation} \label{eq:approx_inf_dev_reg_est}
    \PP^\agent (|\rho_{T, \delta(T)}
        - \hat \rho_{T, \delta(T)}|  \leq \epsilon)
    \geq 1- f(T, \epsilon),
\end{equation}
for some $f$ with $f(T, \epsilon) \to 0$ as $T \to \infty$. We will refer to the event
\[
    E = \{ |\rho_{T, \delta(T)}
        - \hat \rho_{T, \delta(T)}|  \leq \epsilon \}
\]
as the ``good event''.
For any $\epsilon > 0$, for large enough $T$ such that $\delta(T) < \epsilon$, on the good event, we have
\begin{align*}
    |\rho_T - \hat \rho_{T, \delta(T)}|
    &= |(\rho_T - \rho_{T, \delta(T)})
        + (\rho_{T, \delta(T)} - \hat \rho_{T, \delta(T)})| \\
    &\leq |\rho_T - \rho_{T, \delta(T)}|
        + |\rho_{T, \delta(T)} - \hat \rho_{T, \delta(T)}| \\
    &\leq \epsilon + \epsilon \\
    &= 2\epsilon,
\end{align*}
which in turn shows that for any $\epsilon > 0$
\begin{equation*}
    \lim_{T\to\infty} \PP^\agent (
        |\rho_{T}
        - \hat \rho_{T, \delta(T)}|  \leq \epsilon)
    =1,
\end{equation*}
the statement we set out to prove.

We focus on the estimation problem in Eq.~\ref{eq:approx_inf_dev_reg_est} for the rest of the proof.
As we did in the $H$-step deviation regret estimation case, we argue through the idealized estimator $\hat \rho_{T, \delta(T)}^*$.
We decompose the estimator error as
\begin{align}
    |\regret_{T, \delta(T)} -  \hat \regret_{T, \delta(T)}|
    &= |\regret_{T, \delta(T)} - \hat \regret_{T, \delta(T)}^*
        + \hat \regret_{T, \delta(T)}^* - \hat\regret_{T, \delta(T)}| \nonumber \\
    &\leq   |\regret_{T, \delta(T)} - \hat \regret_{T, \delta(T)}^* |
        +  | \hat \regret_{T, \delta(T)}^* - \hat \regret_{T, \delta(T)} |,   \label{eq:thm2:est_err_decomp}
\end{align}
where we used the triangle inequality. The first term is the estimation error for the idealized estimate, the second term is the difference between the idealized and the truncated estimates. We bound each of these separately.

\paragraph{Idealized Estimator Error.}
We will use the shorthand $H = H(\delta, \gamma)$.
Consider
\begin{align*}
    \Delta_T
    &= T(\hat \regret_{T, \delta(T)}^* - \regret_{T, \delta(T)})  \\
    &= \sum_{t=1}^T ({G_t'}^{\{H\}} - \E^\phi[G_t^{\{H\}} | H_{t-1}])
        + (G_t^\infty - \E^\lambda[G_t^\infty | H_{t-1}]),
\end{align*}
where we regrouped the terms to capture the difference between the true returns and their random estimates.
Let $\delta_t = ({G_t'}^{\{H\}} - \E^\phi[G_t^{\{H\}} | H_{t-1}]) + (G_t^\infty - \E^\lambda[G_t^\infty | H_{t-1}])$, the terms in the sum. We will show that $\delta_t$ is a martingale difference sequence (MDS), hence $\Delta_T$ is a martingale. Towards this, we need $\E^\lambda[\delta_t | H_{t-1}] = 0$ and $|\delta_t|$  bounded. We have
\begin{align*}
    \E^\lambda[\delta_t |  H_{t-1}]
    &= \E^\lambda \Big[ 
        ({G_t'}^{\{H\}} - \E^\phi[G_t^{\{H\}} | H_{t-1}]))
        + (G_t^\infty - \E^\lambda[G_t^\infty | H_{t-1}]))
        \,\Big|\, H_{t-1} \Big] \\
    &= \E^\lambda [ {G_t'}^{\{H\}} |  H_{t-1} ] 
        - \E^\lambda \Big[ \E^\phi[G_t^{\{H\}} | H_{t-1}] \,\Big|\, H_{t-1} \Big]
        + \E^\lambda [ G_t^\infty |  H_{t-1} ]  - \E^\lambda[G_t^\infty | H_{t-1}]
\end{align*}
by linearity of expectation. The last two terms cancel and now we show the first two do as well. Since $\E^\phi[G_t^{\{H\}} | H_{t-1}]$ is the expected deviation return (a constant), the second term itself is the same constant. In the first term $ G_t' =  W_{t:t+H-1} G_t^{\{H\}}$, where $W_{t:t+H-1}$ is the importance sampling that corrects from the agents' behaviour to the deviation's behaviour. That is, it is well known that
\[
    \E^\lambda [ G_t' |  H_{t-1} ]
    = \E^\lambda [ W_{t:t+H-1} G_t^{\{H\}} |  H_{t-1} ]
    = \E^\phi [ G_t^{\{H\}} |  H_{t-1} ],
\]
and we see that the first two terms also cancel. We conclude $\E^\lambda[\delta_t |  H_{t-1}] = 0$.

We bound $|\delta_t|$ by noting that each term in it is bounded. More precisely, we use Facts~\ref{fact:bounded_1} and \ref{fact:inf_IS_estimate_bounded} bounding the individual terms, the triangle inequality, and that for any $X < c$, we also have $\E[X|Y] < c$ for all $Y$.
\begin{align*}
    |\delta_t| 
    &\leq | {G_t'}^{\{H\}} | + | \E^\phi[G_t^{\{H\}} | H_{t-1}])|
        + |G_t^\infty| + |\E^\lambda[G_t^\infty |  H_{t-1}]) | \\
    &\leq c^{-H}Hr^* + Hr^* + \frac{2}{1-\gamma} r^*.
\end{align*}
Therefore, $\Delta_T$ is a martingale with the increments, $\delta_t$, bounded by 
\[
    b(c, H, \gamma, r^*) = (c^{-H}H+H + 2(1-\gamma)^{-1})r^*.
\]
Note that $\Delta_0 = 0$. By the Azuma-Hoeffding inequality, we conclude that for any $\epsilon_1 > 0$ and $T \geq 1$
\begin{equation*}
    \PP^\agent(|\Delta_T| \geq \epsilon_1)
    \leq 2\exp \left( \frac{- \epsilon_1^2}{2 T b(c, H, \gamma, r^*)^2} \right).
\end{equation*}
Since $|\Delta_T| \geq \epsilon_1$ is equivalent to $|\regret_{T, \delta(T)} -  \hat\regret_{T, \delta(T)}^*| \geq \epsilon_1/T$, letting $\epsilon_2 = \epsilon_1 / T$ we can restate the above as
\begin{equation} \label{eq:thm2:hp_bound_partial1}
    \PP^\agent(|\regret_{T, \delta(T)} -  \hat\regret_{T, \delta(T)}^*| \geq \epsilon_2)
    \leq 2\exp \left( \frac{-(\epsilon_2 T)^2}{2T b(c, H, \gamma, r^*)^2} \right)
    = 2\exp \left( \frac{-\epsilon_2^2 T}{2b(c, H, \gamma, r^*)^2} \right).
\end{equation}
Note that $H \in \Theta(\ln(1/\delta))$, not considering its dependence on $r^*, \gamma$.
Furthermore, $\delta(T) = T^{-1/|4\ln(c)|}$. This makes
$H \approx \ln(T^{1/|4 \ln(c)|}) = \frac{\ln T}{4 |\ln(c)|}$, and we have for the denominator in the exponent in Eq.~\ref{eq:thm2:hp_bound_partial1} that 
\begin{align*}
    2b(c, H, \gamma, r^*)^2
    &= 2 (c^{-H}H+H + 2(1-\gamma)^{-1})^2 {r^*}^2 \\
    &\leq 6  (c^{-2H}H^2 + H^2 + 4(1-\gamma)^{-2}) {r^*}^2  \\
    &\approx 6 \left( (c^{-2\frac{\ln T}{4 |\ln(c)|}} + 1) 
        \left( \frac{\ln T}{4 |\ln(c)|} \right)^2 
        + 4(1-\gamma)^{-2} \right) {r^*}^2.
\end{align*}

Here, using $a^{\log_b(x)} = x^{\log_b(a)}$,
\[
    c^{-2\frac{\ln T}{4 |\ln(c)|}}
    = \exp\left(\frac{\ln T}{2\ln(c)} \cdot \ln c\right)
    = \exp\left(\frac{\ln T}{2}\right) = T^{\frac{1}{2}},
\]
so 
\begin{align*}
    2b(c, H, \gamma, r^*)^2
    &\leq  C_0 \left( (T^{\frac{1}{2}} + 1) 
        \left( \frac{\ln T}{4 |\ln(c)|} \right)^2 
        + 4(1-\gamma)^{-2} \right) {r^*}^2 \\
    &\leq  C_1 \left(  
        T^{\frac{3}{4}} |2\ln(c)|^{-2}
        + 4(1-\gamma)^{-2} \right) {r^*}^2 \\
    &\leq  C_1 T^{3/4} 
        ( |2\ln(c)|^{-2} + 4(1-\gamma)^{-2} ) {r^*}^2,
\end{align*}
for some constants $C_i \in \mathbb{R}$ and for sufficiently large $T$. We used that $\ln T \leq C_2 T^{1/8}$ for large enough $T$ and some $C_2 > 0$, and that $T^{3/4} \geq 1$.
Plugging this back into Eq.~\ref{eq:thm2:hp_bound_partial1} we find
\begin{equation} \label{eq:thm2:hp_bound_partial2}
    \PP^\agent(|\regret_{T, \delta(T)} -  \hat\regret_{T, \delta(T)}^*| \geq \epsilon_2)
    \leq 2\exp \left( \frac{-\epsilon_2^2 T^{1/4}}{
        C_1 \left( |2\ln(c)|^{-2}
            + 4(1-\gamma)^{-2} \right) {r^*}^2 } \right).
\end{equation}
We see that the idealized estimate gets close to the true deviation with increased interaction time $T$. This completes the bound on the first error term.

\paragraph{Error due to truncation.}
We turn our attention to the second term of Eq.~\ref{eq:thm2:est_err_decomp},  $|\hat \regret_{T, \delta(T)}^* - \hat \regret_{T, \delta(T)}|$. The analysis will use $\delta$ for $\delta(T)$ and $H$ for $H(\delta, \gamma)$ except when the dependence on the arguments is important.
By definition and some algebra,
\begin{align*}
    T|\hat \regret_{T, \delta}^* - \hat \regret_{T, \delta}|
     &= \left|\sum_{t=1}^T  \left({G_t'}^{\{H\}} - G_t^\infty \right)
        -  \left( {G_t'}^{\{\min(T-t+1, H)\}} - G_t^{\{T-t+1\}} \right) \right| \\
     &= \left|\sum_{t=1}^T  \left({G_t'}^{\{H\}} - {G_t'}^{\{\min(T-t+1, H)\}} \right)
        -  \left(G_t^\infty  - G_t^{\{T-t+1\}} \right) \right| \\
     &= \left|\sum_{t=1}^T  \left({G_t'}^{\{H\}} - {G_t'}^{\{\min(T-t+1, H)\}} \right)
        -  \gamma^{T-t+1} G_{T+1}^\infty \right| \\
     &\leq \left|\sum_{t=1}^T  \left({G_t'}^{\{H\}} - {G_t'}^{\{\min(T-t+1, H)\}} \right) \right|
        + \left| \sum_{t=1}^T   \gamma^{T-t+1} G_{T+1}^\infty \right|,
\end{align*}
where in the last step we used the triangle inequality.
The second term is bounded as
\[
    \left| \sum_{t=1}^T   \gamma^{T-t+1} G_{T+1}^\infty \right|
    \leq \frac{r^*}{1-\gamma} \sum_{t=1}^T  \gamma^{T-t+1} 
    \leq \frac{r^*}{(1-\gamma)^2}.
\]
For the first term, when $H \leq T - t +1$, the difference is $0$.
We continue with the other case, that is, $\min(T-t+1, H) = T-t+1$.
{\allowdisplaybreaks
\begin{align*}
    \left|\sum_{t=1}^T {G_t'}^{\{H\}} - {G_t'}^{\{T-t+1\}} \right|
     &= \left|\sum_{t=1}^T  \sum_{i=t}^{t+H-1} \gamma^{i-t} W_{t:i} R_i 
        - \sum_{i=t}^{T} \gamma^{i-t} W_{t:i} R_i \right| \\
     &= \left|\sum_{t=1}^T  \sum_{i=T+1}^{t+H-1} \gamma^{i-t} W_{t:i} R_i  \right| \\
     &\leq \sum_{t=1}^T \sum_{i=T+1}^{t+H-1} \gamma^{i-t} W_{t:i} \, r^* \\
     &\leq \sum_{t=1}^T \sum_{i=T+1}^{t+H-1} \gamma^{i-t} \left( \prod_{k=t}^i \frac1c \right)  \, r^* \\
     &= \sum_{t=T-H+1}^T \sum_{i=T+1}^{t+H-1} \gamma^{i-t} c^{-(i-t+1)} \, r^* \\
     &\leq c^{-1} \sum_{t=T-H+1}^T \sum_{i=T+1}^{T+H-1} \gamma^{i-t} c^{-(i-t)} \, r^* \\
     &\leq c^{-1} \sum_{t=T-H+1}^T \sum_{h=0}^{2H} (\gamma / c)^h \, r^* \\
     &= r^* c^{-1} H \sum_{h=0}^{2H} (\gamma / c)^h.
\end{align*}}
We control $f(H, \gamma/c) := \sum_{h=0}^{2H} (\gamma / c)^h$ dependent on where $\gamma / c$ lands compared to 1.
\begin{itemize}
    \item If $\gamma / c < 1$,  $ f(H, \gamma/c) \leq (1-\gamma/c)^{-1}$.
    \item If $\gamma / c = 1$,  $ f(H, \gamma/c) \leq 2H + 1$.
    \item If $\gamma / c > 1$,  $ f(H, \gamma/c) \leq \frac{(\gamma/c)^{2H+1} - 1}{\gamma/c - 1}$.
\end{itemize}
In conclusion, for any $H, \delta, T$
\[
    T|\hat \regret_{T, \delta}^* - \hat \regret_{T, \delta}|
    \leq \frac{r^*}{(1-\gamma)^2} + r^* c^{-1} H f(H, \gamma/c).
\]
We now make explicit the dependency of $H$ on $T$ through $\delta(T)$, while continuing to suppress the dependency of $H$ on $\gamma$ and $r^*$.
As before, for $\delta(T) = T^{-1/|4\ln(c)|}$
we have $H \approx \ln(T^{1/|4 \ln(c)|}) = \frac{\ln T}{4 |\ln(c)|}$
and
\[
    T|\hat \regret_{T, \delta(T)}^* - \hat \regret_{T, \delta(T)}|
    \lesssim \frac{r^*}{(1-\gamma)^2} + r^* c^{-1} \frac{\ln T}{4 |\ln(c)|} \, f\left (\frac{\ln T}{4 |\ln(c)|}, \gamma/c \right),
\]
where $f$ scales the worst in $H$ for the $\gamma / c > 1$ case. In this setting,
\[
    f(H, \gamma/c)
    \leq \frac{(\gamma/c)^{2H+1} - 1}{\gamma/c - 1}
    \leq  \frac{(\gamma/c)\, T^{\frac{\ln(\gamma/c)}{2|\ln(c)|}}-1}{\gamma/c-1},
\]
where, noting the range of $c$ and $\gamma$, we see
$ \frac{\ln(\gamma/c)}{2|\ln(c)|} = \frac{-\ln(c)}{2(-\ln(c))} + \frac{\ln(\gamma)}{-2\ln(c)} \leq 1/2$, since the second term is negative. That is, we are guaranteed that $|\hat \regret_{T, \delta(T)}^* - \hat \regret_{T, \delta(T)}| \in O(T^{-1/2})$. With this, we are ready to finish up the proof.

\paragraph{The Estimation Problem.}
We originally set out to analyze the truncated regret estimation problem introduced in Eq.~\ref{eq:approx_inf_dev_reg_est}, through the error decomposition of Eq.~\ref{eq:thm2:est_err_decomp}. We now provided a bound for each of the error terms and we are ready to combine them for the desired result. Eq.~\ref{eq:thm2:est_err_decomp} stated that
\begin{align*}
    |\regret_{T, \delta(T)} -  \hat \regret_{T, \delta(T)}|
    &\leq   |\regret_{T, \delta(T)} - \hat \regret_{T, \delta(T)}^* |
        +  | \hat \regret_{T, \delta(T)}^* - \hat \regret_{T, \delta(T)} |.
\end{align*}
We chose $\delta(T) = T^{-1/|4\ln(c)|}$, which indeed approaches $0$ as $T \to \infty$.
Then, we saw that for all $\epsilon_2 > 0$ there exist constants $C_0, T_0$ (independent of $\epsilon_2$) such that for all $T > T_0$
\[
    \PP^\agent(|\regret_{T, \delta(T)} -  \hat\regret_{T, \delta(T)}^*| \geq \epsilon_2)
    \leq 2\exp \left( \frac{-\epsilon_2^2 T^{1/4}}{
        C_0 \left( |2\ln(c)|^{-2}
            + 4(1-\gamma)^{-2} \right) {r^*}^2 } \right).
\]
Finally, we saw that there exists $f_0$ and $T_1$ such that for any $T \geq T_1$ 
\[
    |\hat \regret_{T, \delta(T)}^* - \hat \regret_{T, \delta(T)}|
    \leq \frac{1}{\sqrt{T}} f_0 (r^*, \gamma, c).
\]
We can conclude that
\begin{align*}
    |\regret_{T, \delta(T)} -  \hat \regret_{T, \delta(T)}|
    &\leq   |\regret_{T, \delta(T)} - \hat \regret_{T, \delta(T)}^* |
        + \frac{1}{\sqrt{T}} f_0 (r^*, \gamma, c).
\end{align*}
Then, 
$\forall \varepsilon > 0$, choosing $T \geq \max(T_0, T_1, ( f_0(r^*, \gamma, c) / \varepsilon)^2)$, we have $\varepsilon \geq f_0(r^*, \gamma, c) / \sqrt{T}$, and
\begin{align*}
    \PP^\agent \left( |\regret_{T, \delta(T)} -  \hat \regret_{T, \delta(T)}| \geq 2\varepsilon  \right)
    & \leq
    \PP^\agent \left( |\regret_{T, \delta(T)} - \hat \regret_{T, \delta(T)}^* |
        +  f_0(r^*, \gamma, c) / \sqrt{T} \geq 2\varepsilon  \right) \\
    & \leq
    \PP^\agent \left( |\regret_{T, \delta(T)} - \hat \regret_{T, \delta(T)}^* | 
            \geq \varepsilon  \right) \\
    & \leq 2\exp \left( \frac{-\varepsilon^2 T^{1/4}}{
        C_0 \left( |2\ln(c)|^{-2}
            + 4(1-\gamma)^{-2} \right) {r^*}^2 } \right).
\end{align*}
This bound indeed goes to $0$ as $T \to \infty$, so the proof is complete.
\end{proof}

\section{Experimental Details}
We used the default hyperparameters for PPO that are commonly used for MuJoCo environments based on ~\citep{shengyi2022the37implementation}. We show those hyperparameters in Table~\ref{tab:ppo_hypers}.

\begin{table}[htb]
    \centering
    \begin{tabular}{l l}
        \hline
        Name & Default Value \\
        \hline
        Policy Network & (64, tanh, 64, tanh, Linear) + Standard deviation variable \\
        Value Network & (64, tanh, 64, tanh, Linear) \\
        Rollout Length & 2048 \\
        Epochs & 4 \\
        Mini-batch size & 64 \\
        GAE, $\lambda$ & 0.95 \\
        Discount factor, $\gamma$ & 0.99 \\
        Clip parameter & 0.2 \\
        Input Normalization & True \\
        Advantage Normalization & True \\
        Value function loss clipping & True \\
        Max Gradient Norm & 0.5 \\
        Optimizer & Adam \\
        Actor step size & 0.0003 \\
        Critic step size & 0.0003 \\
        Optimizer $\epsilon$ & $1 \times 10^{-5}$ \\
        \hline
    \end{tabular}
    \caption{The hyperparameters used for PPO in the continuing Swimmer experiment.}
    ~\label{tab:ppo_hypers}
\end{table}


\end{document}